\title{Aggregating Bipolar Opinions (With Appendix)}
\author{Stefan Lauren}
\affiliation{
  \institution{Imperial College London, UK}}
\email{s18@imperial.ac.uk}
\author{Francesco Belardinelli}
\affiliation{
  \institution{Imperial College London, UK}}
\affiliation{
  \institution{Universite d'Evry, France}}
\email{francesco.belardinelli@imperial.ac.uk}
\author{Francesca Toni}
\affiliation{
  \institution{Imperial College London,UK}}
\email{f.toni@imperial.ac.uk}
\begin{abstract}
	We introduce a novel method to aggregate Bipolar Argumentation \FT{(BA)} 
	Frameworks expressing opinions by different parties in debates. We use Bipolar Assumption-based Argumentation (ABA) 
	as an all-encompassing formalism for 
	\FT{BA} 
	under different semantics. By leveraging on recent results on judgement aggregation in Social Choice Theory, we prove several preservation results, both positive and negative, for relevant properties of Bipolar ABA.
\end{abstract}
\keywords{Bipolar Argumentation; Judgement Aggregation; 
\FT{Social Choice}}
\newcommand{\BibTeX}{\rm B\kern-.05em{\sc i\kern-.025em b}\kern-.08em\TeX}
\newtheorem{corollary}{Corollary}
\newtheorem{lemma}{Lemma}
\newtheorem{definition}{Definition}
\newtheorem{example}{Example}
\DeclareFontFamily{U}{mathx}{\hyphenchar\font45}
\DeclareFontShape{U}{mathx}{m}{n}{ <-> mathx10 }{}
\DeclareSymbolFont{mathx}{U}{mathx}{m}{n}
\DeclareMathAccent{\widebar}{\mathalpha}{mathx}{"73}
\newcommand{\contraries}{\widebar{\hspace{0.35cm}}}
\newcommand{\contrary}[1]{\widebar{#1}}
\DeclareMathAlphabet{\mathbfcal}{OMS}{cmsy}{b}{n}
\newcommand{\FT}[1]{\textcolor{black}{#1}}
\begin{document}


\pagestyle{fancy}
\fancyhead{}


\maketitle 


\section{Introduction}
There is a long and well-established tradition in knowledge representation and reasoning to formally describe debates as exchanges of opinions by the parties involved through attacks \cite{Ulle2017} and supports \cite{Rago2017} between arguments understood abstractly as in {\em Abstract Argumentation (AA)}~\cite{Dung1995} 
\FT{or} as in \textit{Bipolar Argumentation} (BA) \cite{CayrolL05a, CayrolL13}.
%
\FT{When these debates emerge in multi-agent systems~\cite{Ganzer2019, Rago2017},}
a key question concerns opinion aggregation, namely how we can obtain a collective consensus from several opinions expressed as argumentation frameworks, in such a way that the agents' opinions are well-portrayed in the collective outcome \cite{Tohme2017}. Recently, Chen and Endriss \cite{Ulle2017, Ulle2018} applied aggregation procedures from Social Choice Theory \cite{Christian2013} to AA Frameworks (AAFs), by leveraging especially on judgement aggregation \cite{Grossi2014}, and  proving the preservation of interesting properties of AAFs, such as conflict-freeness, acyclicity 
{and 
\FT{extensions} according to several semantics}.
Some efforts had been made to provide procedures for the aggregation of \emph{Quantitative Argumentation Debate Frameworks} \cite{Rago2017, BaroniRTAB15, RagoTAB16} (a form of \emph{BA Frameworks (BAFs)} incorporating both attacks and supports but equipped with gradual, rather than extension-based, semantics), 
but the 
aggregation of  BAFs is an open problem,  and a challenging one mainly because there are different semantic interpretations of 
\FT{support} in BAFs \FT{(e.g., as deductive or necessary \cite{CayrolL13,BoellaGTV10,NouiouaR10})}
, which might generate ``inconsistencies'' in the aggregated framework. 

Moving from the considerations above, we advance the state of the art on the application of Social Choice Theory to opinion aggregation in computational argumentation
by focusing on BA. To address the problem that different parties may adopt different interpretations of support in their opinions (BAFs), we use \textit{Bipolar Assumption-based Argumentation (ABA) frameworks} \cite{Cyras0T17} for representing opinions. Bipolar ABA is a restricted (but \FT{``}non-flat\FT{''}) 
form of ABA providing a unified formalism to accommodate different interpretations of support \cite{Cyras0T17}. Thus, by adopting Bipolar ABA, we let parties choose their interpretation of support before aggregation takes place.
Then, our contribution is twofold. 
Firstly, we define aggregation procedure for Bipolar ABA frameworks based on Social Choice Theory. Our investigations mainly focus on quota and oligarchic rules \cite{Grossi2014}. 
Secondly, we conduct a study on the preservation of properties using the defined aggregation procedures. In some cases, restrictions need to be placed, for example, towards specific aggregation rules.

{\bf Structure of the Paper.} 
In Section \ref{Section: Bipolar ABA} we provide the necessary background on Bipolar ABA. Section \ref{Section: Social Choice Theory} sets the ground, by formulating the aggregation problem, aggregation rules and preservation properties in our  Bipolar ABA setting. 
Section \ref{Section: Preservation Results} 
\FT{gives} the main theoretical contribution of the paper
\FT{, providing} preservation results for various properties of Bipolar ABA frameworks. Finally, Section \ref{Section: Conclusion} concludes and elaborates on several promising directions for future works.
%
%
Because of space limit, we omit some proofs: these  can be found in the Appendix.


\section{Bipolar 
ABA }{\label{Section: Bipolar ABA}}
Bipolar Assumption-based Argumentation \cite{Cyras0T17} (Bipolar ABA
) is a form of structured argumentation, where arguments and attacks are derived from assumptions, rules, and a contrary map from assumptions. \FT{Note that} contrary should not be confused with negation, which may or may not occur in the \FT{underlying} language \FT{\cite{ABAtut}}.

\begin{definition}
\cite{Cyras0T17} A {\em Bipolar ABA framework} is a 
	\FT{quadruple} $\langle \mathcal{L}, \mathcal{R}, \mathcal{A}, \contraries \rangle$, where
    \begin{itemize}
	    \item  ($\mathcal{L}$, $\mathcal{R}$) is a \emph{deductive system} with $\mathcal{L}$ a {\em language} (i.e. a set of sentences) and $\mathcal{R}$ a set of {\em rules} of the form $\phi \leftarrow \alpha$ where $\alpha \in \mathcal{A}$ and either $\phi \in \mathcal{A}$ or $\phi = \contrary{\beta}$ for some $\beta \in \mathcal{A}$; $\phi$ is the {\em head} and $\alpha$ the {\em body} of rule $\phi \leftarrow \alpha$;
    
    \item  $\mathcal{A} \subseteq \mathcal{L}$ is a non-empty set of {\em assumptions};

    \item  $\contraries: \mathcal{A} \rightarrow \mathcal{L}$ is a total map; for $\alpha \in \mathcal{A}$, $\contrary{\alpha}$ is the {\em contrary} of $\alpha$.
    \end{itemize}
\end{definition}

\FT{Then, a {\em deduction} for $\phi \in \mathcal{L}$ supported by $A \subseteq \mathcal{A}$ and $R \subseteq \mathcal{R}$, denoted $A \vdash^R \phi$, is a finite tree with the root labelled by $\phi$; leaves labelled by assumptions, with $A$ 
the set of all such assumptions; and each non-leaf node $\psi$ has a single child labelled by the body of some $\psi$-headed rule in $\mathcal{R}$, with $R$ 
the set of all such rules.}

Note that in Bipolar ABA rules are of a restricted kind, in comparison with generic ABA~\cite{ABA97}: their bodies amount to a single assumption, and thus, in particular, there are no rules with an empty body; also, their heads are either assumptions or contraries thereof.  Because assumptions may be ``deducible'' from rules in Bipolar ABA frameworks, though, these frameworks may be \emph{non-flat} in general, thus lacking some of the properties that \emph{flat} ABA frameworks (where assumptions are not ``deducible'' from rules) exhibit 
\cite{CyrasFST17}.


In Bipolar ABA, $A \subseteq \mathcal{A}$ {\em attacks} $\beta \in \mathcal{A}$ iff $\exists A' \vdash^R \contrary{\beta}$, such that $A' \subseteq A$\FT{;} $\alpha \in \mathcal{A}$ {\em attacks} $\beta \in \mathcal{A}$ iff $\{\alpha\}$ attacks $\beta$\FT{;} $A \subseteq \mathcal{A}$ {\em attacks} $B \subseteq \mathcal{A}$ iff $\exists \beta \in B$ such that $A$ attacks $\beta$. Then, $A$ is {\em conflict-free} iff $A$ does not attacks 
\FT{$A$}. 
\FT{Let} the {\em closure of $A$}\FT{$\subseteq \mathcal{A}$} be     $Cl(A) = \{\alpha \in \mathcal{A} : \exists A' \vdash^R \alpha, A' \subseteq A, R \subseteq \mathcal{R}\}$. Then, $A$ is {\em closed} iff $A = Cl(A)$. 

Several conditions can be imposed on Bipolar ABA frameworks, which characterise different sets   of assumptions (also called \emph{extensions}) according to as many {\em semantics}. Table \ref{Table:Semantics} gives the semantics for Bipolar ABA frameworks we will analyse in the paper, in addition to properties of conflict-freeness and closedness.
\begin{table}
\begin{center}
\caption{Bipolar ABA Semantics (for extension $A \subseteq \mathcal{A}$). \label{Table:Semantics}}
\begin{tabular}{lp{6cm}}
\toprule
\multicolumn{1}{c}{\textbf{Semantics}} & \multicolumn{1}{c}{\textbf{Conditions}} \\ \midrule

{Admissible} & $A$ is closed, conflict-free and for every $B \subseteq \mathcal{A}$, if $B$ is closed and attacks $A$, then $A$ attacks $B$.\\ 

{Preferred} & $A$ is $\subseteq$-maximally admissible. \\ 

{Complete} & $A$ is admissible and $A=\{\alpha \in \mathcal{A} : A$ defends $\alpha \}$ where $A$ {\em defends} $\alpha \in \mathcal{A}$ iff for all closed $B \subseteq \mathcal{A}$: if $B$ attacks $\alpha$ then $A$ attacks $B$. \\ 

	{Set-stable} & $A$ is closed, conflict-free, and attacks $Cl(\beta)$ for each $\beta \in \mathcal{A} \setminus A$.  \\ 

{Well-founded} & $A$ is the intersection of all complete extensions. \\ 

{Ideal} & $A$ is $\subseteq$-maximal such that it is admissible and 
	$A \subseteq B$ for all preferred extensions $B \subseteq \mathcal{A}$. \\ \bottomrule
\end{tabular}
\end{center}
\end{table}
\FT{As a simple illustration of these semantics, consider a  Bipolar ABA framework with $\mathcal{L}=\{\alpha, \beta, \gamma, \contrary{\alpha}, \contrary{\beta}, \contrary{\gamma}\}$,  $\mathcal{A}=\{\alpha, \beta, \gamma\}$ and $\mathcal{R}=\{\contrary{\beta} \leftarrow \gamma, \gamma \leftarrow \alpha\}$.\footnote{With an abuse of notation, we use $\contrary{x}$ to denote the $\mathcal{L}$-sentence amounting to the contrary of $x \in \mathcal{A}$ and omit to specify the contrary map explicitly.} Then, $\{\alpha\}$ and $\{\alpha,\beta \}$ are not closed (and thus not admissible etc.), $\{\beta\}$ is closed and conflict-free but not admissible etc., and  $\{\alpha,\gamma\}$ is closed, conflict-free, admissible (etc.). }
\FT{Note that,} in Bipolar ABA (as in flat ABA, but not in general ABA~\cite{CyrasFST17}), admissible, preferred and ideal extensions are guaranteed to exist: in particular, since rules cannot have an empty body, the empty set of assumption is closed, and thus admissible. Instead, complete, well-founded and set-stable extensions may not exist \FT{\cite{Cyras0T17}}. 


Bipolar ABA provides an all-encompassing framework for capturing different interpretations of support (under admissible, preferred and set-stable semantics) \cite{Cyras0T17}, as illustrated in the following example, adapted from \cite{Rago2017}.\footnote{The formal definitions of how different interpretations of support are captured in Bipolar ABA are in \cite{Cyras0T17}, and outside the scope of this paper.}

\begin{example}{\label{Example: Bipolar ABA}}
The UK public may hold a range of views on Brexit:

         A: The UK should leave the EU.

         B: The UK staying in the EU is good for its economy.

         C: The EU's immigration policies are bad for the UK's economy.

         D: EU membership fees are too high.

         E: The UK staying in the EU is good for world peace.
\\
	Here 
	$A$ may be deemed to be attacked by 
	$B$ and $E$, but supported by $C$ and $D$. A Bipolar ABA representation for the \emph{deductive interpretation} of support~\cite{CayrolL13}
	is $\langle \mathcal{L}, \mathcal{R}, \mathcal{A}, \contraries \rangle$, where

\begin{itemize}
    \item $\mathcal{L} = \{A, B, C, D, E, \contrary{A}, \contrary{B}, \contrary{C}, \contrary{D}, \contrary{E}\}$;
    \item $\mathcal{R} = \{\contrary{A} \leftarrow B, \quad \contrary{A} \leftarrow E, \quad A \leftarrow C, \quad A \leftarrow D\}$;
    \item $\mathcal{A} = \{A, B, C, D, E\}$.
\end{itemize}
	Instead, a Bipolar ABA representation for the \emph{necessary interpretation} of support~\cite{NouiouaR10} is $\langle \mathcal{L}, \mathcal{R}', \mathcal{A}, \contraries \rangle$ with $\mathcal{R}' = \{\contrary{A} \leftarrow B, \quad \contrary{A} \leftarrow E, \quad C \leftarrow A, \quad D \leftarrow A\}$. (Other interpretations of support can also be represented but are 
	not illustrated here for lack of space).
\end{example}

Note that Bipolar ABA can naturally capture \emph{supported attacks} under the deductive interpretation of support~\cite{CayrolL13}, for example the supported attack from $\alpha$ to $\beta$ in a (standard) BAF where $\alpha$ supports $\gamma$ and $\gamma$ attacks $\beta$ is matched by $\alpha$ attacking $\beta$ in a Bipolar ABA framework where $\{\contrary{\beta} \leftarrow \gamma, \gamma \leftarrow \alpha\}\subseteq \mathcal{R}$.


\section{Social Choice 
for Bipolar ABA}\label{Section: Social Choice Theory}
Social choice theory mainly focuses on how to aggregate (people's or agents') opinions into a single collective decision. Broadly speaking, there are mainly two types of aggregation: preference aggregation \cite{Christian2013, Rothe2016} and judgement aggregation \cite{Christian2013} (other aggregation types exist but are omitted here). Here we focus on the latter, and 
adapt notions given by \cite{Ulle2017} to accommodate opinions drawn from Bipolar ABA frameworks. 

Hereafter we assume a set of agents $N = \{1, \ldots, n\}$ ($n >1$), with agents' opinions represented as Bipolar ABA frameworks. 
We assume that the (Bipolar ABA frameworks of) agents have the same language, assumptions, and contraries. Thus, the aggregation combines the rules of the frameworks. 
This is intuitive because 
aggregating 
the Bipolar ABA frameworks means aggregating the attacks and supports in the original BAFs, which correspond to the rules in Bipolar ABA frameworks.
We also assume that agents behave ``rationally'', for example 
that their opinions are not ``self-attacking'' 
, 
\FT{e.g.,} rules $\contrary{\alpha} \leftarrow \alpha$ (for $\alpha \in \mathcal{A}$) will never belong to $\mathcal{R}$; further 
\FT{elaboration} on agents' rationality, 
when they are defined argumentatively, can be found in \cite{Rago2017}.

%
%
To collectively combine the opinions of all agents, i.e., the agents' Bipolar ABA frameworks, we need aggregation rules. 

\begin{definition}
	{\label{Definition: Bipolar ABA aggregation}}
	Let $\mathbfcal{F}$ be the set of all Bipolar ABA frameworks with the same language $\mathcal{L}$, set of assumptions $\mathcal{A}$ and contrary mapping $\contraries$. 
	A \emph{Bipolar ABA aggregation rule} is a mapping $F : \mathbfcal{F}^n \to \mathbfcal{F}$ from $n$ Bipolar ABA frameworks into a single Bipolar ABA framework.
	Given $n$ (as opinions of agents in $N = \{1, \ldots ,n\}$) Bipolar ABA frameworks $\big \langle \mathcal{L}, \mathcal{R}_1, \mathcal{A}, \contraries \big \rangle,$ $\ldots,$ $\big \langle \mathcal{L}, \mathcal{R}_n, \mathcal{A}, \contraries \big \rangle$, $F$ returns 
	a single \emph{aggregated Bipolar ABA framework} $\big \langle \mathcal{L}, \mathcal{R}_{agg}, \mathcal{A}, \contraries \big \rangle$.
\end{definition}

Inspired by graph aggregation \cite{Grandi2017}, we restrict attention in this paper to
aggregation rules in the form of  
either quota rules or oligarchic rules, defined below in our setting.

\paragraph{Quota Rules.}
These set a quota $q \in N$ as a threshold to accept some set of rules $R \subseteq \mathbfcal{R} = \bigcup_{i \in N} \mathcal{R}_i$, i.e., there should be at least $q$ agents that accept the rules in $R$.

\begin{definition}
	The \emph{quota rule} 
	$F_q$, for $q \in N$,  is a Bipolar ABA aggregation rule such that \(F_q(
	\big \langle \mathcal{L}, \mathcal{R}_1, \mathcal{A}, \contraries \big \rangle, \ldots,\big \langle \mathcal{L}, \mathcal{R}_n, \mathcal{A}, \contraries \big \rangle)\!= \!\{ r \in \mathbfcal{R} : r \in \bigcap_{j \in N'} \mathcal{R}_j \text{ for } N'\subseteq N,\; |N'| \geq q\}.
	\)

\end{definition}

The quota $q$ can be any number, but there are several special quotas that are commonly used: 
\emph{weak majority} has a quota $q = \lfloor \frac{n}{2} \rfloor$;
\emph{strict majority} has a quota $q = \lceil \frac{n}{2} \rceil$;
\emph{nomination} accepts all rules accepted by at least 1 agent, i.e., $q = 1$; and \emph{unanimity} requires all agents to accept the same rules,
i.e., $q = n$.

For example, assume that $n=3$ and agents accept sets of rules $\mathcal{R}_1 = \{\contrary{A} \leftarrow B\}$, $\mathcal{R}_2 = \{A \leftarrow C\}$, and $\mathcal{R}_3 = \{\contrary{A} \leftarrow B, \quad A \leftarrow D\}$. Using weak majority, the aggregated Bipolar ABA framework has set of rules $\mathcal{R}_{agg} = \{\contrary{A} \leftarrow B, \quad A \leftarrow C, \quad A \leftarrow D\}$; with strict majority, $\mathcal{R}_{agg} = \{\contrary{A} \leftarrow B\}$; nomination gives $\mathcal{R}_{agg} = \{\contrary{A} \leftarrow B, \quad A \leftarrow C, \quad A \leftarrow D\}$, while unanimity returns $\mathcal{R}_{agg} = \{\}$.

\paragraph{Oligarchic Rules.}
These give agents the power to veto the accepted rule sets.  Clearly, oligarchic rules are not fair in that the opinions of agents without veto power are disregarded. However, in some cases they are necessary to avoid conflicts among the agents.

\begin{definition}
	Let $N_v \subseteq N$ be the \emph{agents with veto power}. 
	The \emph{oligarchic rule} $F_o$ is a Bipolar ABA aggregation rule such that \(F_o(
	\big \langle \mathcal{L}, \mathcal{R}_1, \mathcal{A}, \contraries \big \rangle, \ldots,\big \langle \mathcal{L}, \mathcal{R}_n, \mathcal{A}, \contraries \big \rangle)\!= \!\{ r \in \mathbfcal{R} : r \in \bigcap_{j \in N_v} \mathcal{R}_j \}.
	\)
	If $|N_v|=1$ 
	then the oligarchic rule is called \emph{dictatorship}.
\end{definition}

If all agents have veto powers, then the  oligarchic rule coincides with unanimity. As an example, assume, as above, that $n = 3$ and agents accept sets of rules $\mathcal{R}_1 = \{\contrary{A} \leftarrow B\}$, $\mathcal{R}_2 = \{A \leftarrow C\}$, and $\mathcal{R}_3 = \{\contrary{A} \leftarrow B, \quad A \leftarrow D\}$. If agents 1 and 3 are given veto powers, then 
the aggregated Bipolar ABA framework has set of rules 
$\mathcal{R}_{agg} = \{\contrary{A} \leftarrow B\}$. On the other hand, if all agents have veto powers, then $\mathcal{R}_{agg} = \{\}$, as with unanimity.

We will study whether the properties of the agents' Bipolar ABA frameworks, including semantics, conflict-freeness and closedness, and other ``graph'' properties, are preserved in the aggregated Bipolar ABA framework. To produce stronger preservation results, we assume that a property under consideration for an aggregated Bipolar ABA framework (obtained by applying some Bipolar ABA aggregation rule) needs to be satisfied by all agents.

\begin{definition}
	{\label{Definition: Preservation}}
    Let $P$ be a property of Bipolar ABA frameworks. 
	If $\Delta\subseteq \mathcal{A}$ is $P$ in each agents' Bipolar ABA framework $\big \langle \mathcal{L}, \mathcal{R}_i, \mathcal{A}, \contraries \big \rangle$ (with $i \in N$), then $P$ is \emph{preserved} in 
	the aggregated Bipolar ABA framework $\mathcal{F}=\big \langle \mathcal{L}, \mathcal{R}_{agg}, \mathcal{A}, \contraries \big \rangle$
	if and only if $\Delta$ is $P$ in $\mathcal{F}$. 
\end{definition}

\balance


\section{Preservation Results}{\label{Section: Preservation Results}}
In this section, we present preservation results for several properties $P$ (see Definition~\ref{Definition: Preservation}), specifically for $P$ equal to conflict-freeness and closedness of sets of assumptions, $P$ any of the semantics in Table~\ref{Table:Semantics}, $P$ amounting to assumption acceptability under these semantics, and $P$ amounting to 
(implicative and disjunctive) ``graph'' properties adapted from \cite{Grandi2017}.
Throughout, we will assume that  $\mathcal{F}=\big \langle \mathcal{L}, \mathcal{R}_{agg}, \mathcal{A}, \contraries \big \rangle$ is the aggregated Bipolar ABA framework resulting from the Bipolar ABA aggregation rules as considered, and, for any such rule, we will say that the rule preserves a property $P$ to mean that $P$ is preserved in  $\mathcal{F}$, as specified in Definition~\ref{Definition: Preservation}.

\subsection{Conflict-freeness}

Conflict-freeness is a basic property in argumentation, always preserved in our setting. 

\begin{theorem}{\label{Theorem: conflict-freeness}}
    Every quota rule and oligarchic rule preserves con-flict-freeness.
\end{theorem}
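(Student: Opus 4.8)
The plan is to work directly from the definitions of attack and conflict-freeness in terms of deductions, exploiting the fact that every aggregation rule considered here returns a set $\mathcal{R}_{agg}$ built only from rules already occurring in the agents' frameworks. Fix $\Delta \subseteq \mathcal{A}$ and suppose $\Delta$ is conflict-free in every $\langle \mathcal{L}, \mathcal{R}_i, \mathcal{A}, \contraries\rangle$; by Definition~\ref{Definition: Preservation} it suffices to show that $\Delta$ is conflict-free in $\mathcal{F}$. I would argue by contradiction: assuming $\Delta$ attacks itself in $\mathcal{F}$, there are $\beta \in \Delta$, $A' \subseteq \Delta$ and a finite $R \subseteq \mathcal{R}_{agg}$ with a deduction $A' \vdash^R \contrary{\beta}$, and I would trace this witnessing deduction back to a single agent's framework, contradicting that agent's conflict-freeness.

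The first ingredient is a monotonicity observation about deductions: if $R \subseteq R'$, then any tree witnessing $A' \vdash^R \phi$ is, unchanged, a witness for $A' \vdash^{R'} \phi$, since the conditions imposed on the tree refer only to the rules actually appearing in it. Hence if the finite rule set $R$ used by the witnessing deduction satisfies $R \subseteq \mathcal{R}_i$ for some single agent $i$, then the very same tree shows that $A' \subseteq \Delta$ attacks $\beta \in \Delta$ inside agent $i$'s framework, contradicting conflict-freeness of $\Delta$ there and closing the argument.

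For the oligarchic rule this reduction closes immediately: by definition $\mathcal{R}_{agg} = \bigcap_{j \in N_v} \mathcal{R}_j$, so for any fixed vetoer $j_0 \in N_v$ we have $R \subseteq \mathcal{R}_{agg} \subseteq \mathcal{R}_{j_0}$, and monotonicity localises the attacking deduction to agent $j_0$. The identical one-line containment settles the unanimity quota $q=n$, where $\mathcal{R}_{agg} = \bigcap_{i \in N} \mathcal{R}_i \subseteq \mathcal{R}_i$ for every $i$.

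The delicate case, and the step I expect to be the main obstacle, is a general quota rule, since there $\mathcal{R}_{agg}$ need not be contained in any single $\mathcal{R}_i$: each rule of $R$ is only guaranteed to be shared by at least $q$ agents, and distinct rules of $R$ may be backed by coalitions with empty common intersection. Because a Bipolar ABA attack may chain several rules---precisely the supported-attack phenomenon recalled just before Section~\ref{Section: Social Choice Theory}, where $\gamma \leftarrow \alpha$ and $\contrary{\beta} \leftarrow \gamma$ combine to let $\alpha$ attack $\beta$---the witnessing deduction could assemble rules that no individual agent holds simultaneously, so that a new attack is \emph{synthesised} by aggregation. To force the localisation through I would attempt an induction on the height of the deduction tree, tracking for each subtree the set of agents whose rule set contains every rule used so far and trying to show this set stays non-empty; establishing this invariant (or otherwise isolating the quotas for which it holds) is the crux on which the quota case turns.
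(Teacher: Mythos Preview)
Your plan is exactly the paper's: argue by contradiction, unfold the offending attack in $\mathcal{F}$ as a chain $R=\{\contrary{\beta}\leftarrow\gamma_1,\ \gamma_1\leftarrow\gamma_2,\ \ldots,\ \gamma_{m-1}\leftarrow\alpha\}\subseteq\mathcal{R}_{agg}$ with $\alpha,\beta\in\Delta$, and then localise the whole chain to a single agent. The paper dispatches that localisation in one line, writing that ``by definition of quota and oligarchic rules, there has to be at least one agent $i\in N$ such that $R\subseteq\mathcal{R}_i$''; no further argument is given. So the step you single out as ``the crux'' is precisely the step the paper treats as immediate, and for oligarchic rules and unanimity your containment argument coincides with what the paper needs.

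Your caution about general quota rules is warranted, and the induction you sketch cannot be completed: the invariant you propose (that the set of agents holding every rule used so far is non-empty) genuinely fails. Take $\mathcal{A}=\{\alpha,\beta,\gamma\}$, two agents with $\mathcal{R}_1=\{\gamma\leftarrow\alpha\}$ and $\mathcal{R}_2=\{\contrary{\beta}\leftarrow\gamma\}$, and $\Delta=\{\alpha,\beta\}$. Then $\Delta$ is conflict-free for both agents (agent~1 has no attacks at all; for agent~2 the only attacker of $\beta$ is $\gamma\notin\Delta$), yet under nomination $\mathcal{R}_{agg}=\mathcal{R}_1\cup\mathcal{R}_2$ and $\{\alpha\}\vdash^{\mathcal{R}_{agg}}\contrary{\beta}$, so $\Delta$ is not conflict-free in $\mathcal{F}$. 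This is exactly the supported-attack synthesis you anticipated: the two rules in the chain live in disjoint agents, the intersection of their supporting coalitions is empty, and no single $\mathcal{R}_i$ contains $R$. Hence the localisation step the paper asserts---and which your induction is trying to salvage---does not hold for quota rules below unanimity; your argument is complete for the oligarchic and unanimity cases, and the obstacle you flag for the remaining quotas is a real gap, not a missing trick.
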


\begin{proof}
	Assume that $\Delta\subseteq \mathcal{A}$ is conflict-free in $\big \langle \mathcal{L}, \mathcal{R}_i, \mathcal{A}, \contraries \big \rangle$ for all $i \in N$. By contradiction, assume that $\Delta$ is not conflict-free in $\mathcal{F}$. Then $\exists \alpha, \beta \in \Delta$ such that $\alpha$ attacks $\beta$, i.e., 
	$\exists R=\{\contrary{\beta} \leftarrow \gamma_1, \; \ldots, \; \gamma_{m-1} \leftarrow \gamma_m\}\subseteq \mathcal{R}_{agg}$ for $m \geq 1$, $\{\gamma_1, \ldots, \gamma_m\} \subseteq \mathcal{A}$ and $\gamma_m=\alpha$. By definition of quota and oligarchic rules, there has to be at least one agent $i \in N$ 
	such that $R \subseteq \mathcal{R}_i$ and thus $\alpha$ attacks $\beta$ in $\big \langle \mathcal{L}, \mathcal{R}_i, \mathcal{A}, \contraries \big \rangle$, 
	thus contradicting 
	our assumption that $\Delta$ is conflict-free in $\big \langle \mathcal{L}, \mathcal{R}_i, \mathcal{A}, \contraries \big \rangle$ for all $i \in N$.
\end{proof}

Note that Theorem \ref{Theorem: conflict-freeness} is a direct extension of Theorem 2 in \cite{Ulle2017} because conflict-freeness in Bipolar ABA frameworks holds under the same conditions as in AAFs. 

\subsection{Closedness}

Closedness is an important property in Bipolar ABA frameworks,
made so by the presence of ``support'' between assumptions (in the form of rules whose head and body are assumptions).
Instead, it is not meaningful in AA, and indeed it is not studied in \cite{Ulle2017}.

\begin{theorem}{\label{Theorem: closure preservation}}
	Every quota 
	\FT{and} oligarchic rule preserve\FT{s} closedness.
\end{theorem}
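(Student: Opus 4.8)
The plan is to prove the one nontrivial inclusion $Cl(\Delta) \subseteq \Delta$ computed in $\mathcal{F}$, since $\Delta \subseteq Cl(\Delta)$ is immediate from the one-node deductions $\{\alpha\} \vdash^{\emptyset} \alpha$ for $\alpha \in \Delta$. So assume $\Delta \subseteq \mathcal{A}$ is closed in $\big\langle \mathcal{L}, \mathcal{R}_i, \mathcal{A}, \contraries \big\rangle$ for every $i \in N$, and take $\alpha \in \mathcal{A}$ with $\Delta' \vdash^{R} \alpha$ for some $\Delta' \subseteq \Delta$ and $R \subseteq \mathcal{R}_{agg}$ in $\mathcal{F}$. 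First I would record the structural fact that, because every Bipolar ABA rule has a single assumption as its body, every non-leaf node has exactly one child, so the deduction is a \emph{chain} $\alpha = \gamma_0 \leftarrow \gamma_1 \leftarrow \cdots \leftarrow \gamma_m$ whose unique leaf $\gamma_m \in \Delta' \subseteq \Delta$ and whose rules $\gamma_{k-1} \leftarrow \gamma_k$ (for $1 \le k \le m$) all lie in $R \subseteq \mathcal{R}_{agg}$. Since the root $\alpha$ is an assumption and every body is an assumption, each $\gamma_k$ is an assumption and each rule in the chain is a support rule (assumption head, assumption body).

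The central observation, valid for both quota rules (with $q \ge 1$) and oligarchic rules (with $N_v \neq \emptyset$), is that every rule in $\mathcal{R}_{agg}$ belongs to \emph{at least one} agent's framework: a quota rule accepts $r$ only if at least $q \ge 1$ agents report it, and an oligarchic rule accepts $r$ only if every veto agent reports it. This lets me propagate membership in $\Delta$ along the chain one rule at a time, by reverse induction. The base case $\gamma_m \in \Delta$ holds by hypothesis. For the inductive step, if $\gamma_k \in \Delta$, I pick an agent $i$ with $\gamma_{k-1} \leftarrow \gamma_k \in \mathcal{R}_i$; then $\{\gamma_k\} \vdash^{\{\gamma_{k-1} \leftarrow \gamma_k\}} \gamma_{k-1}$ is a legal deduction in $\big\langle \mathcal{L}, \mathcal{R}_i, \mathcal{A}, \contraries \big\rangle$ from the subset $\{\gamma_k\} \subseteq \Delta$, so by closedness of $\Delta$ in that framework and monotonicity of $Cl$ we get $\gamma_{k-1} \in Cl(\Delta) = \Delta$. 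Iterating back to $\gamma_0 = \alpha$ yields $\alpha \in \Delta$, hence $Cl(\Delta) \subseteq \Delta$ in $\mathcal{F}$, as required.

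The step I expect to be the main obstacle is precisely this decomposition into single-rule steps, because the naive argument ``some agent owns all of $R$'' is false for quota rules: with $q=1$ the aggregated chain can mix rules reported by different agents. The point to isolate is that closedness is a \emph{local}, one-step property -- $\Delta$ is closed iff every support rule $\gamma \leftarrow \delta$ with $\delta \in \Delta$ forces $\gamma \in \Delta$ -- so it is preserved as long as each individual aggregated rule is owned by some agent in whose framework $\Delta$ is closed. I would therefore take care to emphasise that the induction genuinely switches agent at each step, and to note that for oligarchic rules one could alternatively fix a single veto agent $j \in N_v$ (since $\mathcal{R}_{agg} \subseteq \mathcal{R}_j$), whereas for quota rules the stepwise argument is essential; this is what makes the statement uniform across both families of rules.
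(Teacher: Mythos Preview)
Your proof is correct and, in fact, more careful than the paper's own argument on the key point. The paper proceeds by contradiction: assuming $\Delta$ is not closed in $\mathcal{F}$, it takes a deduction chain $R = \{\beta \leftarrow \gamma_1, \ldots, \gamma_{m-1} \leftarrow \gamma_m\} \subseteq \mathcal{R}_{agg}$ with $\gamma_m \in \Delta$ and $\beta \notin \Delta$, and then asserts that ``by definition of quota and oligarchic rules, there has to be at least one agent $i$ such that $R \subseteq \mathcal{R}_i$'', concluding that $\Delta$ fails to be closed for that agent. You have correctly flagged that this step is not justified by the definitions of quota rules: for $q < n$ (in particular for nomination), different rules of the chain may be contributed by different agents, so no single $\mathcal{R}_i$ need contain all of $R$. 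Your reverse induction along the chain---choosing a (possibly different) witnessing agent at each link and invoking the one-step, local characterisation of closedness---is exactly what is needed to make the argument go through for arbitrary quota rules. For oligarchic rules the paper's shortcut \emph{is} valid (since $\mathcal{R}_{agg} \subseteq \mathcal{R}_j$ for any veto agent $j$), and you note this alternative as well. So the overall strategy is the same in spirit (chain deductions plus closedness in some agent's framework), but your decomposition into single-rule steps is the more rigorous version and is genuinely necessary on the quota-rule side.
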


\begin{proof}
	Assume that $\Delta\subseteq \mathcal{A}$ is closed in $\big \langle \mathcal{L}, \mathcal{R}_i, \mathcal{A}, \contraries \big \rangle$ for all $i \in N$. By contradiction, assume that $\Delta$ is not closed in 
	$\mathcal{F}$. 
	Then $\exists \alpha \in \Delta$ and $\beta \not\in \Delta$ such that $\beta \in Cl(\{\alpha\})$, i.e. there exists  
	$R=\{\beta \leftarrow \gamma_1, \; \ldots, \; \gamma_{m-1} \leftarrow \gamma_m\}\subseteq \mathcal{R}_{agg}$ for $m \geq 1$, $\{\gamma_1, \ldots, \gamma_m\} \subseteq \mathcal{A}$ and $\gamma_m=\alpha$. By definition of quota and oligarchic rules, there has to be at least one agent $i \in N$  such that $R \subseteq \mathcal{R}_i$ and thus $\beta \in Cl(\{\alpha\})$ in $\big \langle \mathcal{L}, \mathcal{R}_i, \mathcal{A}, \contraries \big \rangle$, 
 thus contradicting 
	our assumption that $\Delta$ is closed in $\big \langle \mathcal{L}, \mathcal{R}_i, \mathcal{A}, \contraries \big \rangle$ for all $i \in N$.
	
\end{proof}

\subsection{Admissible Extensions}

The preservation result below (Theorem~\ref{Theorem: Admissibility}) for admissibility extends Theorem 3 in \cite{Ulle2017} by considering also support. It assumes constraints on the number of assumptions. Theorem \ref{Theorem: admissibility less than 3 assumptions} below instead analyses the preservation of admissibility for corner cases. 
\begin{theorem}{\label{Theorem: Admissibility}}
    For $|\mathcal{A}| \geq 4$, nomination  is the only quota rule that preserves admissibility.
\end{theorem}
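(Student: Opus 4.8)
The plan is to prove the two halves separately: that nomination $F_1$ preserves admissibility, and that for $|\mathcal{A}| \geq 4$ no quota rule $F_q$ with $q \geq 2$ does. Throughout, the argument must control the fact that attacks in Bipolar ABA are carried by \emph{chains} of rules: a single top attack rule $\contrary{\beta} \leftarrow \gamma_1$ followed by support rules $\gamma_1 \leftarrow \gamma_2, \ldots, \gamma_{m-1} \leftarrow \gamma_m$, where $\gamma_m$ is the attacking assumption.

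For the positive direction, assume $\Delta$ is admissible in every $\big\langle \mathcal{L}, \mathcal{R}_i, \mathcal{A}, \contraries \big\rangle$, and recall that under nomination $\mathcal{R}_{agg} = \bigcup_{i \in N} \mathcal{R}_i$. Closedness of $\Delta$ in $\mathcal{F}$ is given by Theorem~\ref{Theorem: closure preservation} and conflict-freeness by Theorem~\ref{Theorem: conflict-freeness}, so only the defence condition needs work. Take any $B \subseteq \mathcal{A}$ that is closed in $\mathcal{F}$ and attacks $\Delta$ there, via some $\alpha \in B$ attacking some $\beta \in \Delta$ through a chain $\contrary{\beta} \leftarrow \gamma_1, \ldots, \gamma_{m-1} \leftarrow \gamma_m = \alpha$ in $\mathcal{R}_{agg}$. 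The first key step collapses this chain to a single rule: since $\alpha \in B$ and $B$ is closed in $\mathcal{F}$, the support rules $\gamma_{i-1} \leftarrow \gamma_i$ force $\gamma_{m-1}, \ldots, \gamma_1$ into $B$ one by one, so $\gamma_1 \in B$ and $\gamma_1$ attacks $\beta$ via the single rule $\contrary{\beta} \leftarrow \gamma_1$. Now I invoke nomination twice. First, $\contrary{\beta} \leftarrow \gamma_1 \in \bigcup_i \mathcal{R}_i$ means this rule lies in some agent $j$, so $\gamma_1$ attacks $\beta$ already in $\big\langle \mathcal{L}, \mathcal{R}_j, \mathcal{A}, \contraries \big\rangle$; hence the closure of $\{\gamma_1\}$ taken with $\mathcal{R}_j$ is a closed set of agent $j$ attacking $\Delta$, and admissibility of $\Delta$ in $j$ yields a counter-attack, some $\delta \in \Delta$ attacking some $b$ in that closure using only rules of $\mathcal{R}_j$. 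Second, since $\mathcal{R}_j \subseteq \mathcal{R}_{agg}$ under nomination this counter-attack persists in $\mathcal{F}$; and since the closure of $\{\gamma_1\}$ in $\mathcal{R}_j$ is contained in $Cl(\{\gamma_1\})$ computed in $\mathcal{R}_{agg}$, which lies inside $Cl(B) = B$ because $\gamma_1 \in B$ and $B$ is closed, the target $b$ lies in $B$. Thus $\Delta$ attacks $B$ in $\mathcal{F}$, giving defence and hence admissibility.

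For the negative direction I would, for each $q$ with $2 \leq q \leq n$, exhibit a refuting profile. Fix distinct assumptions $\alpha, \beta, \gamma$ (the hypothesis $|\mathcal{A}| \geq 4$ guarantees these exist, with room to spare) and put $\Delta = \{\alpha, \gamma\}$. Let agents $1, \ldots, q$ hold the attack rule $\contrary{\alpha} \leftarrow \beta$; let agent $1$ additionally hold $\contrary{\beta} \leftarrow \alpha$ and agents $2, \ldots, q$ hold $\contrary{\beta} \leftarrow \gamma$; and let agents $q+1, \ldots, n$ hold no rules. Each counter-attack is then supported by at most $q-1$ agents, which is exactly where $q \geq 2$ is used, whereas $\contrary{\alpha} \leftarrow \beta$ is supported by $q$. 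One checks that $\Delta$ is admissible in every agent: it is closed (no support rules occur, so closure cannot enlarge it), conflict-free, and its sole attacker $\beta$ is counter-attacked by $\alpha$ in agent $1$ and by $\gamma$ in agents $2, \ldots, q$. In $\mathcal{F} = F_q(\ldots)$, however, only $\contrary{\alpha} \leftarrow \beta$ reaches the quota, so $\{\beta\}$ is a closed set attacking $\Delta$ that $\Delta$ can no longer attack back, and admissibility fails.

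I expect the defence step of the positive direction to be the main obstacle, since it hinges on two closure-containments that must be tracked precisely: pushing the intermediate assumptions of the attack chain into $B$ to reduce a supported attack to a single rule, and then ensuring the counter-attack obtained from per-agent admissibility both survives into $\mathcal{F}$ (via $\mathcal{R}_j \subseteq \mathcal{R}_{agg}$, which fails for every $q \geq 2$) and targets an assumption genuinely inside $B$. In the negative direction the delicate point is to keep each counter-attack below the quota while preserving per-agent admissibility, which is exactly why the two defenders in $\Delta$ counter the same assumption $\beta$ through two distinct rules and why no support rules are introduced, so that $\Delta$ remains closed in every framework.
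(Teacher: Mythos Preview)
Your proof is correct, and in both halves it diverges from the paper's in instructive ways.

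For the positive direction, the paper argues by contradiction and, at the crucial step, asserts that the entire attack chain $R=\{\contrary{\alpha}\leftarrow\gamma_1,\ldots,\gamma_{m-1}\leftarrow\gamma_m\}\subseteq\mathcal{R}_{agg}$ must lie in a single agent's rule set $\mathcal{R}_i$ under nomination; it then invokes that agent's admissibility to produce a counter-attack on the specific attacker $\beta$. You instead exploit that the attacking set $B$ is \emph{closed} in $\mathcal{F}$ to push the intermediate assumptions $\gamma_{m-1},\ldots,\gamma_1$ into $B$, thereby reducing the supported attack to the single rule $\contrary{\beta}\leftarrow\gamma_1$, which genuinely does live in one agent $j$. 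You then use $\mathcal{R}_j\subseteq\mathcal{R}_{agg}$ twice---once to import the counter-attack, once (via $Cl_j(\{\gamma_1\})\subseteq Cl_{\mathcal{F}}(\{\gamma_1\})\subseteq B$) to guarantee its target lands inside $B$. This buys you a rigorous treatment of both the multi-agent provenance of chains and the fact that admissibility only demands attacking \emph{some} element of the closed attacker, not the particular $\beta$; the paper's version glosses over both points.

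For the negative direction, the paper's counterexample uses four assumptions and a support rule ($A\leftarrow B$), while yours is purely attack-based and uses only three active assumptions, splitting the defence of $\Delta=\{\alpha,\gamma\}$ against $\beta$ between two distinct counter-attack rules so that each falls one agent short of the quota. Both constructions work; yours is more elementary (and, incidentally, shows that the bound $|\mathcal{A}|\geq 4$ is not tight for this direction).
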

\begin{proof}
	First we prove that nomination preserves admissibility.
	Assume that $\Delta\subseteq \mathcal{A}$ is admissible in $\big \langle \mathcal{L}, \mathcal{R}_i, \mathcal{A}, \contraries \big \rangle$ for all agents $i \in N$. By contradiction, assume that $\Delta$ is not admissible in 
	$\mathcal{F}$. 
	Then $\exists \alpha \in \Delta$ that is attacked by $\beta \in \mathcal{A} \setminus \Delta$, i.e., $R=\{\contrary{\alpha} \leftarrow \gamma_1, \; \ldots, \; \gamma_{m-1} \leftarrow \gamma_m \}\subseteq \mathcal{R}_{agg}$, for $m \geq 1$ and $\gamma_m=\beta$, and 
	$\not\exists \gamma \in \Delta$ such that $\gamma$ attacks $\beta$
	in $\mathcal{F}$.
	By definition of nomination rule, 
	$R \subseteq \mathcal{R}_i$ for some $i\in N$ and $\beta$ attacks $\alpha$ in the Bipolar ABA framework $\mathcal{F}_i$ of agent $i$. Then, given that $\Delta$ is admissible in $\mathcal{F}_i$,  $\exists \gamma \in \Delta$ such that $\gamma$ attacks $\beta$ in $\mathcal{F}_i$, i.e.,
	 $\exists R'=\{\contrary{\beta} \leftarrow \delta_1, \; \ldots, \; \delta_{l-1} \leftarrow \delta_l \}\subseteq \mathcal{R}_{i}$, for $l \geq 1$ and $\delta_l=\gamma$. 	
	 But, by definition of nomination rule, $R' \subseteq \mathcal{R}_{agg}$, and thus $\gamma$ attacks $\beta$ in $\mathcal{F}$: contradiction.
    To complete the proof, we need to show that for $|\mathcal{A}| \geq 4$, other quota rules except for nomination do not preserve admissibility. If $N - q$ agents choose rule $\mathcal{R} = \{\}$, $q-1$ agents choose rule $\mathcal{R} = \{\contrary{D} \leftarrow B, \quad \contrary{C} \leftarrow D\}$, and 1 agent chooses rule $\mathcal{R} = \{\contrary{D} \leftarrow A, \quad \contrary{C} \leftarrow D, \quad A \leftarrow B\}$, as illustrated in Figure \ref{fig:Admissibility}, then  
	$\Delta = \{A,B,C\}$ is admissible in all frameworks. Using quota rules with $q > 1$, 
	\FT{$\mathcal{R}_{agg} = \{\contrary{C} \leftarrow D\}$} and $\Delta$ is not admissible anymore as assumption $C$ is attacked by $D$ and it is undefended by other assumptions in $\Delta$. 
\end{proof}

\begin{figure}
    \centerline{\includegraphics[width = 0.42\textwidth]{./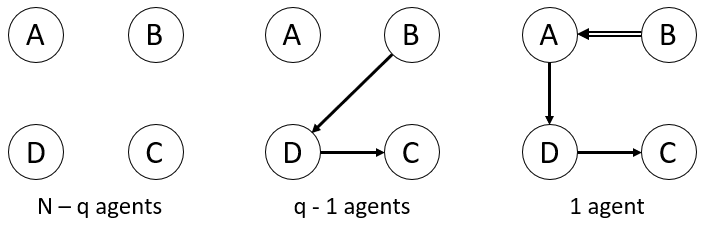}}
	\caption{Counter Example for the Preservation of Admissibility (for Theorem~\ref{Theorem: Admissibility}). Here, we use BAFs as a graphical representation of Bipolar ABA frameworks (assuming the deductive interpretation of support). Single-lined edges are attacks, double-lined edges are supports.}
    \label{fig:Admissibility}
    \Description{Three graphical representations of Bipolar ABA frameworks to show the counter example of the preservation of admissible extension.}
\end{figure}

We remark that for at most three assumptions, 
quota and oligarchic rules are guaranteed to preserve admissibility.

\begin{theorem}{\label{Theorem: admissibility less than 3 assumptions}}
    For $|\mathcal{A}| \leq 3$, every quota rule and oligarchic rule preserves admissibility.
\end{theorem}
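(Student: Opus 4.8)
The plan is to reduce the statement to the defence clause of admissibility and then use the scarcity of assumptions to block the ``vote-splitting'' of defences that underlies the negative result of Theorem~\ref{Theorem: Admissibility}. First, by Theorems~\ref{Theorem: conflict-freeness} and~\ref{Theorem: closure preservation} every quota and every oligarchic rule already preserves conflict-freeness and closedness; so, assuming $\Delta\subseteq\mathcal{A}$ is admissible in each $\langle \mathcal{L}, \mathcal{R}_i, \mathcal{A}, \contraries\rangle$, it is closed and conflict-free in $\mathcal{F}$, and it remains only to show that $\Delta$ defends itself in $\mathcal{F}$, i.e.\ that for every closed $B\subseteq\mathcal{A}$, if $B$ attacks $\Delta$ then $\Delta$ attacks $B$. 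I would proceed by contradiction, fixing a closed $B$ that attacks $\Delta$ in $\mathcal{F}$ while $\Delta$ does not attack $B$.

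Next I would exploit the fact that in Bipolar ABA every deduction is a single chain (rule bodies are single assumptions), so every attack is carried by a single assumption: $B$ attacks $\Delta$ means some $\delta\in B$ attacks some $\beta\in\Delta$, and conflict-freeness of $\Delta$ in $\mathcal{F}$ forces $\delta\in\mathcal{A}\setminus\Delta$, whence $|\Delta|\le|\mathcal{A}|-1\le 2$. This splits the argument into the cases $|\Delta|=2$ and $|\Delta|=1$. When $|\Delta|=2$, say $\Delta=\{\beta,\gamma\}$ and $\mathcal{A}=\{\beta,\gamma,\delta\}$, the attacking chain from $\delta$ to $\contrary{\beta}$ cannot run through $\gamma$ without the final rule $\contrary{\beta}\leftarrow\gamma$ making $\gamma$ attack $\beta$, contradicting conflict-freeness in $\mathcal{F}$; hence the attack collapses to a single rule $\contrary{\beta}\leftarrow\delta\in\mathcal{R}_{agg}$.

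I would then track the agents responsible for this surviving attack: for a quota rule $F_q$ the set $S$ of agents whose framework contains $\contrary{\beta}\leftarrow\delta$ satisfies $|S|\ge q$, and for an oligarchic rule $S\supseteq N_v$. In each $i\in S$ the assumption $\delta$ attacks $\beta$, so admissibility of $\Delta$ in $\mathcal{F}_i$ yields a counterattack of $\Delta$ against the closure of $\{\delta\}$. The crux is to show that with at most three assumptions this counterattack is witnessed by one and the same rule $r^\star$ in every $i\in S$: the only candidate attackers of $\delta$ are the (at most two) members of $\Delta$, and conflict-freeness, together with the rationality assumption excluding a reciprocal attack between $\delta$ and $\beta$ once $\delta$ already attacks $\beta$, collapses the admissible witnesses to a single rule with head $\contrary{\delta}$. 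Then $r^\star\in\mathcal{R}_i$ for all $i\in S$, so $r^\star$ is accepted by at least $q$ agents (resp.\ by all of $N_v$) and survives into $\mathcal{R}_{agg}$; hence $\Delta$ attacks $B$ in $\mathcal{F}$, the desired contradiction.

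I expect this uniqueness-of-defence step to be the main obstacle, i.e.\ ruling out that distinct agents in $S$ defend through distinct rules and thereby split the vote below the quota. This is precisely the mechanism behind the failure for $|\mathcal{A}|\ge 4$ in Theorem~\ref{Theorem: Admissibility}, where the attacked assumption admits two independent defenders; the bound $|\mathcal{A}|\le 3$ is exactly what removes the second defender, so the force of the hypothesis must be extracted through conflict-freeness and the rational exclusion of reciprocal attacks. The subcase $|\Delta|=1$, where there are two assumptions outside $\Delta$ and the attacker may additionally support the other outside assumption (enlarging the closure that $\Delta$ must hit), is the most delicate and will require a separate, careful enumeration of the closed attacking sets $B$ and of the rules that can witness the defence.
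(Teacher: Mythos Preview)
Your decomposition is more careful than the paper's, which for $|\mathcal{A}|=3$ treats only the singleton $\Delta=\{\alpha\}$ explicitly, asserts that any attacking deduction $R\subseteq\mathcal{R}_{agg}$ must lie wholly in some $\mathcal{R}_i$, and concludes that $\Delta$ is then not admissible for agent $i$; it dismisses the remaining cases as ``similar'' and never tracks defences or vote-splitting at all.

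Your proposal, however, has a genuine gap at exactly the point you yourself flag as ``the main obstacle''. You invoke a ``rationality assumption excluding a reciprocal attack between $\delta$ and $\beta$ once $\delta$ already attacks $\beta$'', but the paper's rationality assumption excludes only self-loops $\contrary{\alpha}\leftarrow\alpha$; it says nothing about mutual attacks between distinct assumptions. Without that extra constraint the defence can genuinely split across agents even with three assumptions. Take $\mathcal{A}=\{\beta,\gamma,\delta\}$, $\Delta=\{\beta,\gamma\}$, $n=q=2$, $\mathcal{R}_1=\{\contrary{\beta}\leftarrow\delta,\ \contrary{\delta}\leftarrow\gamma\}$ and $\mathcal{R}_2=\{\contrary{\beta}\leftarrow\delta,\ \contrary{\delta}\leftarrow\beta\}$: then $\Delta$ is closed, conflict-free and admissible for both agents, yet unanimity gives $\mathcal{R}_{agg}=\{\contrary{\beta}\leftarrow\delta\}$, leaving $\delta$'s attack on $\beta$ undefended in $\mathcal{F}$. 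So your uniqueness-of-defence step fails, and this instance shows that the vote-splitting mechanism behind Theorem~\ref{Theorem: Admissibility} already bites at $|\mathcal{A}|=3$. The paper's own argument glosses over precisely the same point: from ``$\delta$ attacks $\alpha$ in $\mathcal{F}_i$'' it jumps to ``$\Delta$ is not admissible for agent $i$'' without considering that $\Delta$ may well defend itself in $\mathcal{F}_i$ via a rule that does not survive aggregation.
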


\begin{proof} 
	If $|\mathcal{A}| = 1$, the result holds vacuously. 
	If $|\mathcal{A}| = 2$, assume that $\mathcal{A} = \{\alpha, \beta\}$ 
	and $\Delta = \{\alpha\}$ 
	is admissible in $\big \langle \mathcal{L}, \mathcal{R}_i, \mathcal{A}, \contraries \big \rangle$ for all $i \in N$. Then $r=\contrary{\alpha} \leftarrow \beta \notin \mathcal{R}_i$ for all $i \in N$, and thus $r \not\in \mathcal{R}_{agg}$. Then, every quota and oligarchic rule yields $\{\alpha\}$ as an admissible extension in $\mathcal{F}$.
	The other cases ($\Delta = \{\beta\}$, $\Delta = \{\}$ or $\Delta=\mathcal{A}$) can be proven similarly.
    If $|\mathcal{A}| = 3$, assume that $\mathcal{A} = \{\alpha, \beta, \gamma\}$. Consider the case where $\Delta = \{\alpha\}$ 
	is admissible in $\big \langle \mathcal{L}, \mathcal{R}_i, \mathcal{A}, \contraries \big \rangle$ for all $i \in N$. By contradiction, assume that $\Delta$ is not admissible in $\mathcal{F}$. Then, given that $\Delta$ is conflict-free and closed in $\mathcal{F}$ no matter which aggregation rule, by Theorems~\ref{Theorem: conflict-freeness} and \ref{Theorem: closure preservation}, there are 
	$R \subseteq \mathcal{R}_{agg}$ and $A \subseteq \{\beta, \gamma\}$ such that $A \vdash^R \contrary{\alpha}$ in $\mathcal{F}$. By quota and oligarchic rules, there must be $i \in N$ such 
	that $R \subseteq \mathcal{R}_{i}$; thus $\Delta$ is not admissible 
	for agent $i$: contradiction.
	The other cases can be proven similarly.
\end{proof}

Note that the restriction to consider at most three assumptions may be useful in some settings, e.g., when at most three options are up for debate.

\subsection{Set-stable Extensions}

The set-stable semantics for Bipolar ABA frameworks generalises the stable semantic for AAFs to accommodate supports (see \cite{Cyras0T17}). This generalisation though does not affect preservation. 
Thus, Theorem \ref{Theorem: set-stable} extends Proposition 5 in \cite{Ulle2017}.

\begin{theorem}{\label{Theorem: set-stable}}
    Nomination is the only quota rule that preserves set-stable extensions.
\end{theorem}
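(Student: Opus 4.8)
The plan is to prove the two directions of the ``only'' claim separately: first that nomination ($q=1$) preserves set-stable extensions, and then that for every quota $q$ with $2 \leq q \leq n$ there is an instance on which $F_q$ fails to preserve them, so that nomination is indeed the \emph{only} quota rule with this property. The positive direction will reuse Theorems~\ref{Theorem: conflict-freeness} and \ref{Theorem: closure preservation} for two of the three defining conditions, and the negative direction will be handled by an explicit family of counterexamples in the spirit of the one used for Theorem~\ref{Theorem: Admissibility}.

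For the positive direction, suppose $\Delta \subseteq \mathcal{A}$ is set-stable in every agent framework $\mathcal{F}_i = \langle \mathcal{L}, \mathcal{R}_i, \mathcal{A}, \contraries\rangle$. Under nomination $\mathcal{R}_{agg} = \bigcup_{i\in N}\mathcal{R}_i$, so the rule set only grows. I would check the three conditions of set-stability in $\mathcal{F}$ in turn: closedness and conflict-freeness follow directly from Theorem~\ref{Theorem: closure preservation} and Theorem~\ref{Theorem: conflict-freeness}. For the remaining condition, fix $\beta \in \mathcal{A}\setminus\Delta$ and any agent $i$; since $\Delta$ is set-stable in $\mathcal{F}_i$, it attacks the closure of $\{\beta\}$ computed in $\mathcal{F}_i$, i.e.\ it attacks some $\delta$ derivable from $\{\beta\}$ using $\mathcal{R}_i$. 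Because $\mathcal{R}_i \subseteq \mathcal{R}_{agg}$, both the derivation witnessing $\delta \in Cl(\{\beta\})$ in $\mathcal{F}$ and the deduction of $\contrary{\delta}$ from a subset of $\Delta$ survive in $\mathcal{F}$, so $\Delta$ still attacks $Cl(\{\beta\})$ there. The observation that makes this work is monotonicity: adding rules can only enlarge each $Cl(\{\beta\})$ and can never delete an attack, and since attacking a set only requires attacking one of its members, the possibly larger closure is harmless. This is precisely the feature that higher quotas destroy.

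For the negative direction, fix $q$ with $2 \leq q \leq n$ and partition $N$ into groups $G_1,\ldots,G_k$ each of size at most $q-1$ (so $k=\lceil n/(q-1)\rceil$, which is finite since $q\geq 2$). Take $\mathcal{A}=\{A_1,\ldots,A_k,D\}$, $\Delta=\{A_1,\ldots,A_k\}$, and let every agent in $G_j$ hold the single rule $\contrary{D}\leftarrow A_j$. In each agent framework $\Delta$ is closed (no rule head is an assumption), conflict-free (no rule attacks any $A_l$), and attacks the only outside assumption $D$ via $A_j$, with $Cl(\{D\})=\{D\}$; hence $\Delta$ is set-stable for every agent. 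But each rule $\contrary{D}\leftarrow A_j$ is supported by only $|G_j|\leq q-1<q$ agents, so $F_q$ discards all of them: $\mathcal{R}_{agg}=\emptyset$, and $\Delta$ no longer attacks $D$, so it is not set-stable in $\mathcal{F}$. This refutes preservation for every $q\geq 2$.

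I expect the main obstacle to be the third condition of the positive direction rather than the counterexample. One must argue carefully that, although nomination can strictly enlarge $Cl(\{\beta\})$, this never breaks the requirement that $\Delta$ attack everything outside it, and that the attacking deductions themselves transfer from $\mathcal{F}_i$ to $\mathcal{F}$. A secondary point to get right is that the counterexample must be uniform in $q$: the group-partition construction is what simultaneously guarantees that no attacking rule ever reaches the threshold $q$ while each agent individually still attacks $D$, thereby keeping every $\mathcal{F}_i$ set-stable yet killing set-stability after aggregation.
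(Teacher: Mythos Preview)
Your proposal is correct and follows essentially the same two-part structure as the paper: the positive direction via Theorems~\ref{Theorem: conflict-freeness} and~\ref{Theorem: closure preservation} plus monotonicity of attacks under nomination, and the negative direction via a counterexample in which every agent attacks the outside assumption but no single attacking rule reaches the quota.

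The one noteworthy difference is the counterexample. The paper fixes three agents with $\mathcal{R}_1=\{\contrary{D}\leftarrow B,\;B\leftarrow A\}$, $\mathcal{R}_2=\{\contrary{D}\leftarrow C\}$, $\mathcal{R}_3=\{\contrary{D}\leftarrow A,\;\contrary{C}\leftarrow D,\;A\leftarrow B\}$ and $\Delta=\{A,B,C\}$; each rule occurs once, so any $q>1$ yields $\mathcal{R}_{agg}=\emptyset$. Your parametric construction (partition $N$ into blocks of size $\leq q-1$, one attacker $A_j$ per block) is cleaner and explicitly uniform in $n$ and $q$, whereas the paper's example must be read as padded with empty-rule agents to cover arbitrary $n$. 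On the other hand, the paper's example deliberately includes support rules ($B\leftarrow A$, $A\leftarrow B$), underscoring the Bipolar ABA setting; yours uses only attack rules, which is sufficient but less thematic.
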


\begin{proof}
	Assume that $\Delta\subseteq \mathcal{A}$ is set-stable in $\big \langle \mathcal{L}, \mathcal{R}_i, \mathcal{A}, \contraries \big \rangle$ for all $i \in N$. By Theorem~\ref{Theorem: conflict-freeness} and \ref{Theorem: closure preservation}, nomination preserves closedness and conflict-freeness. Therefore, $\Delta$ is both closed and conflict-free in 
	$\mathcal{F}$. To be set-stable, $\Delta$ has to attack 
	the closure of every assumption $\beta$ not $\Delta$, i.e., 
	$\exists R  \subseteq \mathcal{R}_{agg}$ such that $\{\alpha\} \vdash^R \contrary{\gamma}$ for some $\gamma \in Cl(\{\beta\})$.
This is trivially the case if $\Delta = \mathcal{A}$. 
	Otherwise, as $\Delta$ is set-stable in all agents' frameworks, then 
	$R_i\subseteq \mathcal{R}_i$ must exist, for all $i \in N$, such that $\{\alpha\} \vdash^{R_i} \contrary{\gamma}$ for some $\gamma \in Cl(\{\beta\})$ in $\big \langle \mathcal{L}, \mathcal{R}_{i}, \mathcal{A}, \contraries \big \rangle$. 
	Thus, by using nomination, $Cl(\{\beta\})$, for $\beta \in \mathcal{A} \setminus \Delta$, is attacked also in $\mathcal{F}$.
    
	Other quota rules do not preserve set-stable extensions because, 
	\FT{while preserving} conflict-freeness and closedness, they do not guarantee that the closure of every assumption not in the extension is attacked. A counter example 
	\FT{follows}: assume three Bipolar ABA frameworks with rules $\mathcal{R}_1 = \{\contrary{D} \leftarrow B, \quad B \leftarrow A\}$, $\mathcal{R}_2 = \{\contrary{D} \leftarrow C\}$, and $\mathcal{R}_3 = \{\contrary{D} \leftarrow A, \quad \contrary{C} \leftarrow D, \quad A \leftarrow B\}$, as illustrated in Figure \ref{fig:Set-stable}. In each framework, the set of assumptions $\{A,B,C\}$ is set-stable. Using other quota rules with $q > 1$,
	$\mathcal{R}_{agg} = \{\}$, and $\{A,B,C\}$ is not set-stable anymore as the assumption $D$ is not included in it and it is not attacked either ($Cl(\{D\})=\{D\}$ in this example).
\end{proof}

\begin{figure}
    \centerline{\includegraphics[width = 0.42\textwidth]{./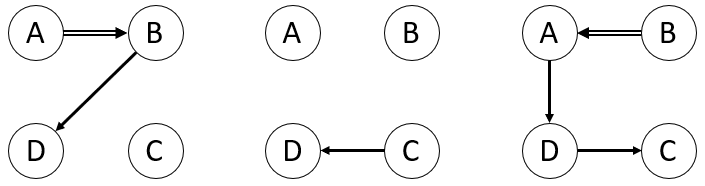}}
	\caption{Counter Example of the Preservation of Set-stable Extensions (for Theorem~\ref{Theorem: set-stable}).
	Here, we use BAFs as a graphical representation of Bipolar ABA frameworks (assuming the deductive interpretation of support). Single-lined edges are attacks, double-lined edges are supports.}
    \label{fig:Set-stable}
    \Description{Three graphical representations of Bipolar ABA frameworks to show the counter example of the preservation of set-stable extension.}
\end{figure}

\subsection{Assumption Acceptability}{\label{Section: The acceptability of an assumption}}

\FT{Assumption acceptability} concerns the preferred, complete, set-stable, well-founded, and ideal semantics 
\FT{but at the level of single assumption} rather than 
\FT{full extensions}. If an assumption is acceptable in one of those semantics (by belonging to a set of assumptions accepted by the semantics) in each of the agents' framework, then preservation amounts to that assumption being still acceptable under the same semantics in the aggregated Bipolar ABA framework.

\begin{definition}(Acceptability of Assumptions)
	An assumption $\alpha\in \mathcal{A}$ is 
	\emph{acceptable} (in a Bipolar ABA framework) under preferred, complete, set-stable, well-founded, or ideal semantic iff 
	there 
	\FT{is} $\Delta\!\subseteq\!\mathcal{A}$ with $\alpha \!\in \!\Delta$ such that $\Delta$ is (respectively) a preferred, complete, set-stable, well-founded, or ideal extension
	(in the Bipolar ABA framework).
\end{definition}

The proof of preservation results regarding acceptability 
use adaptations of 
results from \cite{Grandi2017} on \emph{implicative} and \emph{disjunctive} properties to represent impossibility results with dictatorship. 
We cast these properties for Bipolar ABA frameworks as follows:

\begin{definition}{(Implicative Properties).}{\label{Definition: Implicative}}
    A Bipolar ABA framework property $P$ is \emph{implicative} in $\big \langle \mathcal{L}, \mathcal{R}, \mathcal{A}, \contraries \big \rangle$ iff there exist three rules $R_1, R_2, R_3 \notin \mathcal{R}$ such that $P$ holds in 
	$\big \langle \mathcal{L}, \mathcal{R}_{agg}, \mathcal{A}, \contraries \big \rangle$ for $\mathcal{R}_{agg} = \mathcal{R} \cup \mathcal{S}$, for all $\mathcal{S} \subseteq \{R_1, R_2, R_3\}$, except for $\mathcal{S} = \{R_1, R_2\}$.
\end{definition}

Intuitively, 
if the aggregated Bipolar ABA framework includes $R_1$ and $R_2$ as additional rules in $\mathcal{S}$, then it should adopt $R_3$ as well to preserve property $P$.

\begin{definition}{(Disjunctive Properties).}{\label{Definition: Disjunctive}}
    A Bipolar ABA framework property $P$ is \emph{disjunctive} in $\big \langle \mathcal{L}, \mathcal{R}, \mathcal{A}, \contraries \big \rangle$ iff there exist two rules $R_1, R_2 \notin \mathcal{R}$, such that $P$ holds in 
	$\big \langle \mathcal{L}, \mathcal{R}_{agg}, \mathcal{A}, \contraries \big \rangle$ for $\mathcal{R}_{agg} = \mathcal{R} \cup \mathcal{S}$, for all $\mathcal{S} \subseteq \{R_1, R_2\}$, except for $\mathcal{S} = \{\}$.
\end{definition}

Intuitively, 
the aggregated Bipolar ABA framework 
has to include at least one of $R_1$ or $R_2$ to preserve the property $P$. 
Definitions of implicative and disjunctive properties lead us to prove two lemmas on preservation.

\begin{lemma}{\label{Lemma: Oligarchy}}
	Let a Bipolar ABA framework property $P$ be implicative in $\big \langle \mathcal{L}, \mathcal{R}_{i}, \mathcal{A}, \contraries \big \rangle$, for each $i \in N$. Then,
    unanimity preserves $P$.
\end{lemma}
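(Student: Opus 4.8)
The plan is to exploit the fact that unanimity computes the intersection $\mathcal{R}_{agg} = \bigcap_{i \in N} \mathcal{R}_i$ of the agents' rule sets, together with the observation that an implicative property fails in exactly one configuration of its three distinguished rules, namely when $R_1$ and $R_2$ are both present but $R_3$ is absent. First I would fix rules $R_1, R_2, R_3$ witnessing that $P$ is implicative, taken common to all agents, and record the two facts I will need: (i) since $\emptyset \neq \{R_1, R_2\}$, instantiating the implicative definition with $\mathcal{S} = \emptyset$ shows that $P$ holds in every agent's framework $\langle \mathcal{L}, \mathcal{R}_i, \mathcal{A}, \contraries \rangle$; and (ii) none of $R_1, R_2, R_3$ belongs to any $\mathcal{R}_i$, so that a rule from $\{R_1, R_2, R_3\}$ can enter $\mathcal{R}_{agg}$ only if it is contributed unanimously, while the remaining ``base'' rules are fixed and shared across agents.

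Next I would argue by contradiction: suppose $P$ fails in the aggregated framework $\mathcal{F} = \langle \mathcal{L}, \mathcal{R}_{agg}, \mathcal{A}, \contraries \rangle$. Reading $\mathcal{R}_{agg}$ as the common base extended by the subset $\mathcal{S} \subseteq \{R_1, R_2, R_3\}$ of distinguished rules it happens to contain, the implicative structure of $P$ forces $\mathcal{S}$ to be the unique forbidden configuration $\{R_1, R_2\}$; that is, $R_1, R_2 \in \mathcal{R}_{agg}$ while $R_3 \notin \mathcal{R}_{agg}$. By the definition of unanimity, $R_1 \in \mathcal{R}_{agg}$ forces $R_1 \in \mathcal{R}_i$ for every $i \in N$, and likewise $R_2 \in \mathcal{R}_i$ for every $i$; whereas $R_3 \notin \mathcal{R}_{agg}$ yields some agent $k \in N$ with $R_3 \notin \mathcal{R}_k$.

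The contradiction then comes from agent $k$: its framework contains $R_1$ and $R_2$ but not $R_3$, so agent $k$'s configuration of the distinguished rules is exactly the forbidden $\{R_1, R_2\}$, whence $P$ fails for agent $k$, contradicting fact (i). Therefore $\mathcal{R}_{agg}$ cannot realise the forbidden configuration, $P$ holds in $\mathcal{F}$, and unanimity preserves $P$.

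I expect the main obstacle to be the bookkeeping that turns ``$P$ holds or fails in $\mathcal{F}$'' into a statement purely about which of $R_1, R_2, R_3$ are present, i.e. justifying that the aggregated framework's status under $P$ is governed by the same implicative truth table as each agent's. This is precisely where the shared witnesses $R_1, R_2, R_3$ and the hypothesis $R_1, R_2, R_3 \notin \mathcal{R}_i$ (guaranteeing a common fixed base outside the three distinguished rules) do the real work; once that modelling step is in place, the downward-monotone behaviour of intersection makes the component-wise reasoning on $R_1, R_2, R_3$ entirely routine.
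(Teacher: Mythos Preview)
Your overall strategy is the paper's: argue that under unanimity the aggregated configuration of distinguished rules cannot equal $\{R_1, R_2\}$, since that would force some agent $k$ to have exactly $\{R_1, R_2\}$ as its own configuration. The paper asserts this in one sentence; you spell out the contradiction explicitly, which is fine.

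The setup, however, is internally inconsistent. Fact (ii) records $R_1, R_2, R_3 \notin \mathcal{R}_i$ for every $i$, which is indeed what the literal definition of ``implicative in $\langle \mathcal{L}, \mathcal{R}_i, \mathcal{A}, \contraries \rangle$'' gives. But you then derive $R_1, R_2 \in \mathcal{R}_i$ from $R_1, R_2 \in \mathcal{R}_{agg}$ and unanimity; this already contradicts fact (ii), so your analysis of agent $k$ is never reached. More seriously, under this literal reading $\mathcal{R}_{agg} = \bigcap_i \mathcal{R}_i$ is a \emph{subset} of each $\mathcal{R}_i$, whereas the implicative truth table only governs frameworks of the form $\mathcal{R}_i \cup \mathcal{S}$; it therefore tells you nothing about whether $P$ holds in $\mathcal{R}_{agg}$, and your step ``the implicative structure of $P$ forces $\mathcal{S} = \{R_1, R_2\}$'' is unjustified. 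The paper's intended (and admittedly informal) reading, visible in its proof, is that there is a single common base $\mathcal{R}$ with $R_1, R_2, R_3 \notin \mathcal{R}$ and each agent has $\mathcal{R}_i = \mathcal{R} \cup \mathcal{S}_i$ for some $\mathcal{S}_i \subseteq \{R_1, R_2, R_3\}$ with $\mathcal{S}_i \neq \{R_1, R_2\}$. Then $\mathcal{R}_{agg} = \mathcal{R} \cup \bigcap_i \mathcal{S}_i$, the implicative table applies to $\mathcal{R}_{agg}$, and your contradiction argument on agent $k$ goes through verbatim. Replace fact (ii) with this reading and the proof is sound.
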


\begin{proof}
Let 
	$\mathcal{R}_1 \!\supseteq \!\mathcal{S}_1, \ldots, \mathcal{R}_n \!\supseteq \!\mathcal{S}_n$ and let
	$\mathcal{S}_i \!\subseteq \!\{R_1,R_2,R_3\}$ for all $i\! \in \!N$.
	Let $\mathcal{S}_i \!\neq \!\{R_1, R_2\}$ for all $i \!\in \!N$. Then, unanimity preserves $P$ because it is impossible to get $\mathcal{S}_{agg} \!= \!\{R_1, R_2\}$, with $\mathcal{R}_{agg} \!\supseteq \!\mathcal{S}_{agg}$.
\end{proof}

Note that, even if $P$ is implicative, nomination and majority do not preserve $P$ in general, as it is possible to get $\mathcal{S}_{agg} = \{R_1, R_2\}$. For example, let $\mathcal{S}_1 = \{R_1\}$, and $\mathcal{S}_2 = \{R_2\}$. Using nomination and majority,  $\mathcal{R}_{agg} \supseteq$ $\mathcal{S}_{agg} = \{R_1, R_2\}$; hence, $P$ is not guaranteed to be preserved.

\begin{lemma}{\label{Lemma: Dictatorial}}
	Let a Bipolar ABA framework property $P$ be implicative and disjunctive 
	in $\big \langle \mathcal{L}, \mathcal{R}_{i}, \mathcal{A}, \contraries \big \rangle$, for each $i \in N$. Then, the only Bipolar ABA aggregation rule that preserves $P$ 
	is dictatorship.
\end{lemma}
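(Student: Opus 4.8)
The plan is to establish the two inclusions separately. That dictatorship preserves $P$ is immediate and uses neither hypothesis: with dictator $d$ the output is $\mathcal{R}_{agg}=\mathcal{R}_d$, so whenever $P$ holds in every input framework it holds in $\mathcal{F}=\langle\mathcal{L},\mathcal{R}_d,\mathcal{A},\contraries\rangle$. All the content lies in the converse: for every quota rule and every oligarchic rule with at least two vetoers, I will build a profile in which $P$ holds for each agent but fails after aggregation. I exploit the implicative witnesses $R_1,R_2,R_3$ (over their base $\mathcal{R}$) to defeat the quota rules, and the disjunctive witnesses $Q_1,Q_2$ (over their base $\mathcal{R}'$) to defeat the genuine consensus rules. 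The two constructions must live on different bases, since being implicative forces $P$ to hold on the bare base whereas being disjunctive forces it to fail there.

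For a quota rule $F_q$ with $1\le q\le n-1$ I use the implicative structure. Put $c=\max(0,\,2q-n)$ agents on $\mathcal{R}\cup\{R_1,R_2,R_3\}$, then $q-c$ further agents on $\mathcal{R}\cup\{R_1\}$ and another $q-c$ on $\mathcal{R}\cup\{R_2\}$, and the remaining $n-2q+c\ge 0$ agents on the bare base $\mathcal{R}$. No agent's added set equals $\{R_1,R_2\}$, so by the implicative property $P$ holds for every agent. Counting supports, $R_1$ and $R_2$ each receive exactly $q$ votes and so enter $\mathcal{R}_{agg}$, whereas $R_3$ receives only $c<q$ votes and is excluded; hence the net additions to the base are exactly $\{R_1,R_2\}$, the one case where $P$ fails. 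This single construction covers nomination ($q=1$, where $c=0$) and strict majority for odd $n$ alike, so no quota rule below unanimity survives.

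For an oligarchic rule with $|N_v|\ge 2$ (and in particular for unanimity, the quota rule $q=n$, viewed as $N_v=N$) I use the disjunctive structure. Choose two vetoers $j_1,j_2\in N_v$, assign $j_1$ the framework $\mathcal{R}'\cup\{Q_1\}$ and $j_2$ the framework $\mathcal{R}'\cup\{Q_2\}$, and give every remaining agent $\mathcal{R}'\cup\{Q_1,Q_2\}$. Each agent has a nonempty addition among $\{Q_1,Q_2\}$, so $P$ holds for all of them by the disjunctive property; but $\bigcap_{j\in N_v}\mathcal{R}_j$ omits $Q_1$ (absent from $j_2$) and $Q_2$ (absent from $j_1$), so the added set is empty and $P$ fails in $\mathcal{F}$. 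Thus every oligarchic rule with two or more vetoers, and unanimity, is defeated.

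Having ruled out all quota rules ($q\le n-1$ by the implicative argument, $q=n$ by the disjunctive one) and all oligarchic rules with $|N_v|\ge 2$, dictatorship is the unique surviving rule, which proves the lemma. I expect the main obstacle to be precisely the tension between the two hypotheses: because a property cannot be both implicative and disjunctive over a common base, the argument must be organised around two independent base frameworks and witness sets, and in each profile one must verify that $P$ genuinely holds for every single agent before invoking the rule. The vote-counting in the implicative construction — checking that the choice $c=\max(0,2q-n)$ simultaneously drives $R_1,R_2$ to the quota while keeping $R_3$ below it for every $q<n$ and both parities of $n$ — is the other point where care is needed.
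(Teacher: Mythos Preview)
Your proof is correct and follows the same strategy as the paper: use the implicative witnesses to defeat the ``low'' aggregation rules and the disjunctive witnesses to defeat the ``high'' ones, leaving only dictatorship. The paper's own argument is considerably sketchier---it only mentions nomination, majority and unanimity explicitly and asserts that ``it is possible to get $\mathcal{S}_{agg}=\{R_1,R_2\}$'' or ``$\mathcal{S}_{agg}=\{\}$'' without constructing profiles for a general quota $q$ or for oligarchies with $1<|N_v|<n$; your parametrised profile with $c=\max(0,2q-n)$ agents on $\{R_1,R_2,R_3\}$ and the vote count showing $R_1,R_2$ reach the quota while $R_3$ stays at $c<q$ is a genuine strengthening that closes that gap, and your observation that the implicative and disjunctive bases must differ (since $P$ holds at one bare base and fails at the other) is a point the paper leaves implicit.
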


\begin{proof}
    The proof for the implicativeness can be found in Lemma \ref{Lemma: Oligarchy}. For the disjunctiveness, let $\mathcal{R}_1 \supseteq \mathcal{S}_1, \ldots,$ $ \mathcal{R}_n \supseteq \mathcal{S}_n$ and let
	$\mathcal{S}_i \subseteq \{R_1,R_2\}$ for all $i \in N$. Let $\mathcal{S}_i \neq \{\}$ for all $i \in N$. Then, nomination and majority preserve $P$ because it is impossible to get $\mathcal{S}_{agg} = \{\}$, with $\mathcal{R}_{agg} \supseteq \mathcal{S}_{agg}$.
    As $P$ is implicative and disjunctive, $P$ is preserved only with dictatorship. None of the quota rules preserve $P$ as using nomination or majority rule, it is possible to get $\mathcal{S}_{agg} = \{R_1,R_2\}$ and violating the implicativeness; and using unanimity rule, it is possible to get $\mathcal{S}_{agg} = \{\}$ and violating the disjunctiveness.
\end{proof}

Notice that, in the definition of  implicative and disjunctive properties, the rules $R_1, R_2$, and (if applicable) $R_3$ can only be in the form of $\contrary{\alpha} \leftarrow \beta$ for some $\alpha, \beta \in \mathcal{A}$, thus bringing attacks between assumptions. They cannot be in the form of $\alpha \leftarrow \beta$ that denote supports between assumptions because then some agents may have different closures of assumptions from the other agents. As a consequence, some agents' Bipolar ABA frameworks may satisfy $P$, while some others may not because of closedness. 

The preservation result on the acceptability of an assumption in Theorem~\ref{Theorem: Acceptability of An Assumption} below is an extension, within our more general setting, of Theorem 1 in \cite{Ulle2017}. This result is true for all five semantics: preferred, complete, set-stable, well-founded, or ideal. If the agents' Bipolar ABA frameworks have no (rules for) support, then this Theorem \ref{Theorem: Acceptability of An Assumption} is the same as Theorem 1 in \cite{Ulle2017}.

\begin{theorem}{\label{Theorem: Acceptability of An Assumption}}
	For $|\mathcal{A}| \geq 4$, the only Bipolar ABA aggregation rule that preserves the acceptability of an assumption under preferred, complete, set-stable, well-founded, or ideal semantic
	is dictatorship.
\end{theorem}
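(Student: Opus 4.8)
The plan is to reduce the statement to Lemma~\ref{Lemma: Dictatorial}: I will show that, for $|\mathcal{A}|\geq 4$, the property ``$\alpha$ is acceptable under $\sigma$'' (for each $\sigma$ among preferred, complete, set-stable, well-founded, ideal) is simultaneously \emph{implicative} and \emph{disjunctive} in the sense of Definitions~\ref{Definition: Implicative} and~\ref{Definition: Disjunctive}. Once both hold, Lemma~\ref{Lemma: Dictatorial} gives at once that dictatorship is the only Bipolar ABA aggregation rule preserving it; the converse (that dictatorship does preserve acceptability) is immediate, since for the dictator $i$ we have $\mathcal{R}_{agg}=\mathcal{R}_i$, so every property of $\mathcal{F}$ coincides with that of the dictator's framework. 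Thus the whole argument rests on exhibiting the two witness configurations, using the four assumptions guaranteed by $|\mathcal{A}|\geq 4$; in both I will use only attack rules $\contrary{x}\leftarrow y$ as witnesses, as required by the remark following Lemma~\ref{Lemma: Dictatorial} (support rules would change closures across agents and would spoil preservation of the base property).

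For \textbf{disjunctiveness} I would take assumptions $\alpha,\beta,\gamma,\delta$ and base $\mathcal{R}=\{\contrary{\alpha}\leftarrow\beta\}$ (so $\beta$ attacks $\alpha$), with witnesses $R_1=\contrary{\beta}\leftarrow\gamma$ and $R_2=\contrary{\beta}\leftarrow\delta$. With neither witness present, $\beta$ is an unattacked attacker of $\alpha$, so $\alpha$ lies in no admissible (hence in no preferred, complete or ideal) set, in no set-stable set, and, as $\beta$ belongs to the grounded extension, not in the well-founded one either; acceptability therefore fails exactly for $\mathcal{S}=\{\}$. As soon as $\gamma$ or $\delta$ attacks $\beta$, the set $\{\alpha,\gamma,\delta\}$ becomes the grounded extension (the added defender is unattacked, defeats $\beta$, and reinstates $\alpha$) and is at the same time the unique preferred, complete and ideal extension as well as a set-stable one, so $\alpha$ is acceptable under all five semantics. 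This matches Definition~\ref{Definition: Disjunctive}.

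For \textbf{implicativeness} I would use assumptions $a,b,c,d$ with base $\mathcal{R}=\{\contrary{a}\leftarrow b,\ \contrary{b}\leftarrow c,\ \contrary{b}\leftarrow d\}$ (so $b$ attacks $a$, while $c$ and $d$ both attack $b$), and witnesses $R_1=\contrary{c}\leftarrow a$, $R_2=\contrary{d}\leftarrow a$, $R_3=\contrary{b}\leftarrow a$. The idea is that $a$ is defended against $b$ through $c$, through $d$, or directly via $R_3$; adding $R_1$ and $R_2$ alone makes $a$ attack both of its defenders $c$ and $d$, so no conflict-free set containing $a$ can counterattack $b$ and acceptability fails, whereas in every other subset at least one of $c$, $d$, or the direct counterattack $\contrary{b}\leftarrow a$ survives and some extension contains $a$. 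Verifying the seven ``positive'' subsets together with the single exception $\mathcal{S}=\{R_1,R_2\}$ then matches Definition~\ref{Definition: Implicative}.

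The step I expect to be the real obstacle is verifying both gadgets \emph{uniformly for all five semantics at once}. Admissibility, preferred and set-stable acceptance are credulous (``$\alpha$ in some extension'') and are easy to check, but \emph{well-founded} and \emph{ideal} acceptance refer to a single canonical extension and are hence sensitive to cycles: the natural repair witness $\contrary{b}\leftarrow a$ forms a two-cycle with $\contrary{a}\leftarrow b$, which can empty the grounded (hence ideal and well-founded) extension and drop $a$ from it even when $a$ is still credulously accepted. Keeping every positive profile so that $a$ remains in the grounded extension while only $\{R_1,R_2\}$ stays negative is delicate with exactly four assumptions, and I expect most of the work---and the reason the bound is precisely $|\mathcal{A}|\geq 4$---to lie in choosing the base and the witnesses so that no spurious cycle through $a$ arises in any positive profile, possibly handling the unique-extension semantics with a slightly different, cycle-free configuration and then invoking Lemma~\ref{Lemma: Dictatorial} once per semantics.
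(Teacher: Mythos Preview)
Your overall plan---show that acceptability is both implicative and disjunctive and then invoke Lemma~\ref{Lemma: Dictatorial}---is exactly the paper's strategy, and your disjunctive gadget is essentially the paper's (the paper merely adds a harmless support $D\leftarrow C$ to the base).

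The implicative gadget, however, has the concrete failure you yourself flagged but did not repair. Take $\mathcal{S}=\{R_1,R_2,R_3\}$ in your configuration: then $a$ attacks $b,c,d$ and $b$ attacks $a$, so no assumption is unattacked; $\emptyset$ is complete and the well-founded extension is $\emptyset$, hence $a$ is \emph{not} acceptable under well-founded semantics. Thus one of the seven required ``positive'' profiles already fails, and the property is not shown to be implicative for well-founded semantics.

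The paper's resolution is simpler than the per-semantics patching you anticipate: it moves the focus argument out of the cycle's engine and lets $R_3$ be an attack \emph{from a fresh external defender} rather than from the focus argument. With focus $B$ and base $\mathcal{R}=\{\contrary{C}\leftarrow A,\; D\leftarrow A\}$, take $R_1=\contrary{B}\leftarrow C$, $R_2=\contrary{A}\leftarrow B$, $R_3=\contrary{C}\leftarrow D$. The unique bad profile is $\{R_1,R_2\}$ (an odd $3$-cycle $A\to C\to B\to A$, with $D$ irrelevant), while in every other profile---including $\{R_1,R_2,R_3\}$---the unattacked $D$ (or $A$) defeats $C$ from outside and $B$ sits in the grounded, hence in the well-founded and ideal, extension as well as in a preferred, complete and set-stable one. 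A single configuration thus handles all five semantics at once; no separate cycle-free variant and no ``once per semantics'' application of Lemma~\ref{Lemma: Dictatorial} is needed.
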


\begin{proof}
    Let 
	$P$ be acceptability of an assumption under preferred, complete, set-stable, well-founded, or ideal semantics. We need to prove that for $|\mathcal{A}| \geq 4$, $P$ is implicative and disjunctive. Then, by Lemma~\ref{Lemma: Dictatorial}, the theorem holds. 
	The proof has the same structure for each of the five semantics. Consider 
	a set of at least four assumptions $\mathcal{A} = \{A,B,C,D,\ldots\}$.
    
	To show that $P$ is implicative, let $B$ be the accepted assumption. Let $\mathcal{R} = \{\contrary{C} \leftarrow A, \quad D \leftarrow A\}$, $R_1 = \{\contrary{B} \leftarrow C\}$, $R_2 = \{\contrary{A} \leftarrow B\}$, and $R_3 = \{\contrary{C} \leftarrow D\}$ (see the left graph of Figure \ref{fig:Acceptability of an Assumption}). Consider an aggregated framework  
	with $\mathcal{R}_{agg} = \mathcal{R} \cup \mathcal{S}$ with $\mathcal{S} \subseteq \{R_1, R_2, R_3\}$. 
	If $\mathcal{S} = \{\}$, $\{R_2\}$, $\{R_3\}$, or $\{R_2, R_3\}$ then 
	$B$ is unattacked. If $\mathcal{S} = \{R_1\}$, $\{R_1, R_3\}$, or $\{R_1, R_2, R_3\}$ then   
	$B$ is defended by other assumptions. Therefore, 
	$B$ is either unattacked or defended in all seven cases, and $B$ is acceptable under preferred, complete, set-stable, well-founded, and ideal semantics. However, if $\mathcal{S} = \{R_1, R_2\}$, 
	$\{A,B,C\}$ forms cyclic attacks so that the assumptions $A,B$, and $C$ are not acceptable under preferred, complete, set-stable, well-founded, and ideal semantics. Thus, we have identified 
	a set of rules $\mathcal{R}$ and three rules $R_1, R_2, R_3$ such that $P$ holds in $\big \langle \mathcal{L}, \mathcal{R} \cup \mathcal{S}, \mathcal{A}, \contraries \big \rangle$ iff $\mathcal{S} \neq \{R_1, R_2\}$. Accordingly, $P$ is implicative.
    
	To show that $P$ is disjunctive, let $B$ be the accepted assumption. Let $\mathcal{R} = \{\contrary{B} \leftarrow A, \quad D \leftarrow C\}$, $R_1 = \{\contrary{A} \leftarrow C\}$, and $R_2 = \{\contrary{A} \leftarrow D\}$ ( 
	see the right graph of Figure \ref{fig:Acceptability of an Assumption}). Consider 
	$\mathcal{R}_{agg} = \mathcal{R} \cup \mathcal{S}$ with $\mathcal{S} \subseteq \{R_1, R_2\}$. If $\mathcal{S} = \{R_1\}$, $\{R_2\}$, or $\{R_1, R_2\}$ then the assumption $B$ is defended. Therefore, $B$ is acceptable under the five semantics. However, if $\mathcal{S} = \{\}$, the assumption $B$ is attacked by $A$ and is not defended, thus $B$ is unacceptable under preferred, complete, set-stable, well-founded, and ideal semantics. Thus, 
	we have identified a set of rules $\mathcal{R}$ and two rules $R_1, R_2$ such that $P$ holds in $\big \langle \mathcal{L}, \mathcal{R} \cup \mathcal{S}, \mathcal{A}, \contraries \big \rangle$ iff $\mathcal{S} \neq \{\}$. Therefore, $P$ is  
	disjunctive. 
\end{proof}

\begin{figure}
    \centerline{\includegraphics[width = 0.27\textwidth]{./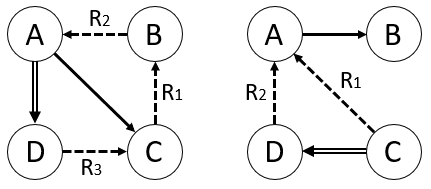}}
	\caption{Acceptability of an Assumption: Implicative case (Left) and Disjunctive case (Right) (for the proof of Theorem~\ref{Theorem: Acceptability of An Assumption}). Here, we use BAFs as a graphical representation of Bipolar ABA frameworks (assuming the deductive interpretation of support). Single-lined \FT{and hyphenated edges} are attacks, double-lined edges are supports. 
	}
    \label{fig:Acceptability of an Assumption}
    \Description{Two graphical representations of Bipolar ABA frameworks to show the acceptability of an assumption is both implicative and disjunctive.}
\end{figure}

Theorem \ref{Theorem: Acceptability of An Assumption} shows that it is not easy to even preserve the acceptability of one assumption, as dictatorship is needed. We will see, in Section \ref{Section: Preferred,Complete,Well-founded,Ideal} below, that it is even more difficult to preserve whole extensions. On the other hand, for $|\mathcal{A}| \leq 3$,  
acceptability of an assumption can be preserved with both quota and oligarchic rules.

\begin{theorem}{\label{Theorem: Acceptability of an Assumption equal 3}}
    For $|\mathcal{A}| = 3$, majority, unanimity, and oligarchic rules preserve assumption acceptability under preferred, complete, set-stable, well-founded, and ideal semantics.
\end{theorem}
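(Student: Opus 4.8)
The plan is to reduce the claim to a finite case check made possible by $|\mathcal{A}|=3$, resting on the preservation results already established. Fix the accepted assumption, say $A$, and a target semantics $S$ among preferred, complete, set-stable, well-founded and ideal. Since every quota and oligarchic rule preserves conflict-freeness and closedness (Theorems~\ref{Theorem: conflict-freeness} and~\ref{Theorem: closure preservation}), any $\Delta\subseteq\mathcal{A}$ that is conflict-free and closed in all agents' frameworks is conflict-free and closed in $\mathcal{F}$; hence it remains only to control the genuinely semantic conditions (defence and $\subseteq$-maximality for preferred, the fixpoint equation for complete, attacking every outside closure for set-stable, and the intersection characterisations for well-founded and ideal). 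With three assumptions there are only eight candidate extensions and a short finite list of attack/support patterns on $\{A,B,C\}$, so each semantics is amenable to enumeration.

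Next I would reduce all five semantics to admissibility, which is the backbone available here: by Theorem~\ref{Theorem: admissibility less than 3 assumptions}, every quota and oligarchic rule preserves admissibility when $|\mathcal{A}|\le 3$. Thus an admissible witnessing set survives aggregation, and the task becomes upgrading such a set to the target semantics inside $\mathcal{F}$. For preferred and ideal I would use $\subseteq$-maximality (and, for ideal, containment in all preferred extensions); for well-founded the intersection of complete extensions; and for set-stable the key simplification that, since $A\in\Delta$, the complement $\mathcal{A}\setminus\Delta$ is small (at most two assumptions, and in the critical case a single assumption whose closure is itself), so that ``attacking the outside'' reduces to tracking one attack explicitly. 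The behaviour of the rules themselves enters through their monotonicity: unanimity and the oligarchic rules are intersection rules, $\mathcal{R}_{agg}=\bigcap_{j\in N_v}\mathcal{R}_j$, so attacks and support-closures only shrink; majority instead retains exactly the rules held by a majority, and I would classify, via Definitions~\ref{Definition: Implicative} and~\ref{Definition: Disjunctive}, whether acceptability on three assumptions is implicative and/or disjunctive in order to apply Lemmas~\ref{Lemma: Oligarchy} and~\ref{Lemma: Dictatorial} and decide which rules are safe.

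The step I expect to be the main obstacle is that acceptability is an \emph{existential} property: each agent may witness ``$A$ is acceptable'' through a \emph{different} $S$-extension, and an aggregation rule need not preserve any single one of these witnesses. The crux of the proof is therefore to show that, with only three assumptions, a common witness containing $A$ can always be reconstructed in $\mathcal{F}$ from the agents' individual witnesses. I would isolate this reconstruction as a standalone lemma and prove it by case analysis on which of $B,C$ attack or defend $A$ in $\mathcal{F}$, feeding in the monotonicity facts above to bound how the attack relation of $\mathcal{F}$ relates to those of the agents.

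Set-stable is the case I would treat last and most carefully, because its ``attacks every outside closure'' requirement is fragile in both directions: deleting attacks (as unanimity and oligarchic rules may) can break an agent's set-stable witness, while admitting attacks not jointly held by any agent (as majority may) risks leaving an outside assumption unattacked. Here I would lean hardest on the small-complement reduction, arguing that whenever $\mathcal{A}$ itself is conflict-free in $\mathcal{F}$ the full set witnesses acceptability trivially, and otherwise that the unique outside assumption is forced to be attacked in $\mathcal{F}$ by the same counts that made $A$ acceptable in every agent. I regard pinning down this last implication --- and, correspondingly, verifying that majority cannot synthesise an acceptability-destroying three-cycle among $A,B,C$ --- as the genuinely delicate point of the argument.
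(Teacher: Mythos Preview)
Your plan diverges substantially from the paper's argument and, more importantly, leans on tools that do not fit the task.

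The paper's proof is a short direct contradiction. Suppose $\alpha$ is \emph{not} acceptable in the aggregated framework $\mathcal{F}$. With $\mathcal{A}=\{\alpha,\beta,\gamma\}$, this means some $\delta\in\{\beta,\gamma\}$ attacks $\alpha$ in $\mathcal{F}$ while the remaining assumption $\theta$ does not attack $\delta$ in $\mathcal{F}$. The attack deduction sits inside $\mathcal{R}_{agg}$, so under majority, unanimity, or an oligarchic rule it must occur in at least a majority / all / all veto agents' rule sets; simultaneously, because the defending deduction is \emph{absent} from $\mathcal{R}_{agg}$, it must be absent from at least one of those same agents. That agent therefore has the attack on $\alpha$ without the defence, contradicting acceptability of $\alpha$ there. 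No witness reconstruction, no enumeration of the eight subsets of $\mathcal{A}$, and no semantics-by-semantics upgrade is needed: the three-assumption bound is used only to name the single potential defender $\theta$.

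Two pieces of your plan would not carry weight even if fleshed out. First, Theorem~\ref{Theorem: admissibility less than 3 assumptions} preserves admissibility of a \emph{fixed} $\Delta$ that is admissible for \emph{every} agent; as you yourself note, assumption acceptability is existential and the agents' witnessing extensions may all differ, so the hypothesis of that theorem is simply unavailable here. Everything you defer to the ``reconstruction lemma'' is therefore the entire proof, not a residual step. Second, invoking Definitions~\ref{Definition: Implicative}--\ref{Definition: Disjunctive} and Lemmas~\ref{Lemma: Oligarchy}--\ref{Lemma: Dictatorial} points in the wrong direction: Lemma~\ref{Lemma: Oligarchy} at best yields unanimity, and Lemma~\ref{Lemma: Dictatorial} would give an \emph{impossibility} (only dictatorship preserves), which is the opposite of what you need for majority. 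Those lemmas are the engine behind the $|\mathcal{A}|\geq 4$ negative result (Theorem~\ref{Theorem: Acceptability of An Assumption}), not the $|\mathcal{A}|=3$ positive one.

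If you want to salvage your outline, drop Theorem~\ref{Theorem: admissibility less than 3 assumptions} and the implicative/disjunctive machinery entirely and argue the contrapositive directly: from non-acceptability of $\alpha$ in $\mathcal{F}$, use the pigeonhole overlap between ``rule present in $\mathcal{R}_{agg}$'' and ``rule absent from $\mathcal{R}_{agg}$'' that majority, unanimity and oligarchic rules provide (but nomination does not) to locate a single agent where $\alpha$ already fails.
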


\begin{proof}
    Let a Bipolar ABA framework property $P$ be the acceptability of an assumption under preferred, complete, set-stable, well-founded, or ideal semantics. Assume that $P$ holds in $\big \langle \mathcal{L}, \mathcal{R}_i, \mathcal{A}, \contraries \big \rangle$ for all $i \in N$, where $\mathcal{A} = \{\alpha, \beta, \gamma\}$ and assume that $\alpha$ is acceptable under preferred, complete, set-stable, well-founded, and ideal semantics in all frameworks.

	By contradiction, assume $P$ does not hold in $\mathcal{F}$. In other words, $\exists R  \subseteq \mathcal{R}_{agg}$ such that $\{\delta\} \vdash^R \contrary{\alpha}$ and not $\exists R  \subseteq \mathcal{R}_{agg}$ such that $\{\theta\} \vdash^R \contrary{\delta}$ for some $\theta \in \{\beta, \gamma\}$ and $\delta \in \{\beta, \gamma\}$, $\theta \neq \delta$. As a result, $\alpha$ is not acceptable under preferred, complete, set-stable, well-founded, and ideal semantics. 
	By definition of 
	of majority rule, unanimity rule, and oligarchic rules, the deduction from rules $\{\delta\} \vdash^R \contrary{\alpha}$ must exist in the majority (majority rule), all (unanimity rule), or veto powered (oligarchic rules) agents' frameworks, but there is at least one framework $\big \langle \mathcal{L}, \mathcal{R}_i, \mathcal{A}, \contraries \big \rangle$ for some $i \in N$ without the deduction $\{\theta\} \vdash^R \contrary{\delta}$ for $\theta \neq \delta$, thus $\alpha$ is not acceptable under preferred, complete, set-stable, well-founded, and ideal semantics in the agents' frameworks as well. It contradicts the initial assumption that $\alpha$ is acceptable in $\big \langle \mathcal{L}, \mathcal{R}_i, \mathcal{A}, \contraries \big \rangle$ for all $i \in N$.
    
    To show that for $|\mathcal{A}| = 3$, nomination rule do not preserve the assumption acceptability under preferred, complete, set-stable, well-founded, or ideal semantics, a counter example is given. Take three Bipolar ABA frameworks with rules $\mathcal{R}_1 = \{\contrary{A} \leftarrow C, \quad \contrary{B} \leftarrow C\}$, $\mathcal{R}_2 = \{\contrary{B} \leftarrow A, \quad \contrary{C} \leftarrow B\}$, and $\mathcal{R}_3 = \{\contrary{C} \leftarrow A, \quad \contrary{A} \leftarrow B\}$. Let assumption $C$ be the accepted assumption in check. From the first framework, a set of assumptions $\{C\}$ is preferred, complete, set-stable, well-founded, and ideal. On the second framework is $\{A,C\}$ and third framework is $\{B,C\}$, both extensions are preferred, complete, set-stable, well-founded, and ideal as well. In all three frameworks, the assumption $C$ is acceptable. It is still acceptable using unanimity rule as the aggregated rule is $\mathcal{R} = \{\}$. However, using nomination rule the preferred, complete, and set-stable extensions are $\{A\}, \{B\}$, and $\{C\}$; while the well-founded and ideal extensions are $\{\}$. Hence, 
	\FT{$\{C\}$} is not acceptable in those five semantics.
\end{proof}

\begin{theorem}{\label{Theorem: Acceptability of an Assumption less than 2}}
    For $|\mathcal{A}| \leq 2$, every quota and oligarchic rule preserves the acceptability of an assumption under preferred, complete, set-stable, well-founded, and ideal semantics.
\end{theorem}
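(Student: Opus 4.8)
The plan is to split on $|\mathcal{A}|$ and argue by contradiction, reusing the preservation of conflict-freeness and closedness (Theorems~\ref{Theorem: conflict-freeness} and \ref{Theorem: closure preservation}). The case $|\mathcal{A}|=1$ is immediate: since bodies are single assumptions and $\contrary{\alpha}\leftarrow\alpha$ is excluded by rationality, the only admissible rule is the inert self-support $\alpha\leftarrow\alpha$, so $Cl(\{\alpha\})=\{\alpha\}$ and $\{\alpha\}$ is closed, conflict-free and unattacked in every $\mathcal{F}_i$ and in $\mathcal{F}$; hence $\{\alpha\}$ is the extension under each of the five semantics and acceptability is preserved trivially.

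For $|\mathcal{A}|=2$, write $\mathcal{A}=\{\alpha,\beta\}$ with $\alpha$ the assumption assumed acceptable in each $\mathcal{F}_i$. The only rules that can matter are the two attacks $\contrary{\beta}\leftarrow\alpha$, $\contrary{\alpha}\leftarrow\beta$ and the two supports $\beta\leftarrow\alpha$, $\alpha\leftarrow\beta$ (self-supports being inert), and the only subsets that could witness acceptability of $\alpha$ are $\{\alpha\}$ and $\{\alpha,\beta\}$. So it suffices to fix a semantics $\sigma$ and show one of these is a $\sigma$-extension of $\mathcal{F}$; assuming neither is, I track the offending rule back to the agents via the quota/oligarchic definition. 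If $\beta\leftarrow\alpha\notin\mathcal{R}_{agg}$, then $\{\alpha\}$ is closed in $\mathcal{F}$ (Theorem~\ref{Theorem: closure preservation}) and conflict-free, because a self-attack $\{\alpha\}\vdash\contrary{\alpha}$ would need $\beta$ derived from $\alpha$, i.e.\ exactly this absent support; so the only way $\{\alpha\}$ fails to be admissible is $\contrary{\alpha}\leftarrow\beta\in\mathcal{R}_{agg}$ while $\contrary{\beta}\leftarrow\alpha\notin\mathcal{R}_{agg}$. Here every agent contributing $\contrary{\alpha}\leftarrow\beta$ has $\beta$ attacking $\alpha$, so its acceptability witness is forced to be $\{\alpha\}$ and must defend it, which puts $\contrary{\beta}\leftarrow\alpha$ into that agent's rules too; these being the same agents, the defending rule clears the same quota and lands in $\mathcal{R}_{agg}$, making $\{\alpha\}$ admissible in $\mathcal{F}$ --- a contradiction. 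The five semantics then follow because with two assumptions the witnessing extension is forced.

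The remaining branch, $\beta\leftarrow\alpha\in\mathcal{R}_{agg}$, is where I expect the main obstacle. Now $Cl_{\mathcal{F}}(\{\alpha\})=\{\alpha,\beta\}$, so the sole candidate is $\{\alpha,\beta\}$, and the danger is a \emph{supported self-attack}: $\contrary{\alpha}\leftarrow\beta$ surviving alongside $\beta\leftarrow\alpha$ yields $\{\alpha\}\vdash\contrary{\alpha}$, which makes $\{\alpha,\beta\}$ conflict-ridden and leaves $\alpha$ unacceptable in $\mathcal{F}$. The leverage is that each agent supplying $\beta\leftarrow\alpha$ has $\{\alpha\}$ non-closed, so its acceptability witness is $\{\alpha,\beta\}$, which must be conflict-free, whence that agent carries neither attack rule; the delicate step is then to argue from the chosen aggregation rule that the attacking rules cannot simultaneously meet their threshold. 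This is clean for unanimity and oligarchic rules, where the supporting agents are precisely the decisive ones and the attacks are therefore absent from $\mathcal{R}_{agg}$, so $\{\alpha,\beta\}$ is closed, conflict-free and unattacked, hence a $\sigma$-extension for every $\sigma$. Verifying that this quota bookkeeping --- ruling out the aggregate supported self-attack --- survives for the remaining quota thresholds under $|\mathcal{A}|\le 2$ is exactly the crux on which the theorem rests, and the point I would scrutinize most carefully.
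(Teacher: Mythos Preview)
Your case split on whether $\beta\leftarrow\alpha\in\mathcal{R}_{agg}$ is more careful than the paper's own argument, and your instinct about the second branch is correct: that crux cannot be closed for arbitrary quota rules, and the statement in fact fails for nomination under preferred, complete and set-stable semantics. Take $n=2$, $q=1$, $\mathcal{A}=\{\alpha,\beta\}$, $\mathcal{R}_1=\{\beta\leftarrow\alpha\}$ and $\mathcal{R}_2=\{\contrary{\alpha}\leftarrow\beta,\ \contrary{\beta}\leftarrow\alpha\}$. In $\mathcal{F}_1$ the unique preferred, complete and set-stable extension is $\{\alpha,\beta\}$; in $\mathcal{F}_2$ the set $\{\alpha\}$ is preferred, complete and set-stable. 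So $\alpha$ is acceptable under each of these three semantics in both agents. But $\mathcal{R}_{agg}=\{\beta\leftarrow\alpha,\ \contrary{\alpha}\leftarrow\beta,\ \contrary{\beta}\leftarrow\alpha\}$: the only closed set containing $\alpha$ is $\{\alpha,\beta\}$, and it is not conflict-free, so $\alpha$ lies in no admissible set of $\mathcal{F}$. This is exactly the aggregate supported self-attack you anticipated. (The same profile does not refute well-founded or ideal preservation, since in $\mathcal{F}_2$ those extensions are empty and the hypothesis fails; whether the theorem survives for those two semantics would need a separate argument.)

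By contrast, the paper's proof is a two-line sketch: for $|\mathcal{A}|=2$ it asserts that no deduction $\{\beta\}\vdash^R\contrary{\alpha}$ with $R\subseteq\mathcal{R}_{agg}$ can exist and then reasons back to the agents, never engaging with support rules and so never confronting the obstruction you isolated. Your first branch ($\beta\leftarrow\alpha\notin\mathcal{R}_{agg}$) is sound --- the observation that every agent carrying $\contrary{\alpha}\leftarrow\beta$ must also carry $\contrary{\beta}\leftarrow\alpha$, so the defending rule clears the same threshold, is correct for both quota and oligarchic rules --- and your second branch goes through for unanimity and oligarchic rules as you note. You therefore have a correct argument on a strict superset of the cases the paper's proof actually justifies, together with a correct diagnosis of where the remaining claim breaks.
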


\subsection{Preferred, Complete, Well-founded, and Ideal Extensions}{\label{Section: Preferred,Complete,Well-founded,Ideal}}

The proof of preservation for the preferred, complete, well-founded, and ideal semantics uses the concept of implicative and disjunctive properties from Section \ref{Section: The acceptability of an assumption}. The preservation result is an extension of Theorem 4 in \cite{Ulle2017} with the addition of ideal semantics.

\begin{theorem}{\label{Theorem: Preferred and Complete}}
	For $|\mathcal{A}| \geq 5$, the only Bipolar ABA aggregation rule that preserves preferred, complete, well-founded, and ideal semantics  
	is dictatorship.
\end{theorem}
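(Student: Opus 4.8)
The plan is to reduce the theorem to Lemma~\ref{Lemma: Dictatorial}. Let $P$ be the property ``$\Delta$ is a preferred (respectively complete, well-founded, ideal) extension'' for a fixed $\Delta \subseteq \mathcal{A}$. If I can show that, for $|\mathcal{A}| \geq 5$, $P$ is both implicative and disjunctive (Definitions~\ref{Definition: Implicative} and~\ref{Definition: Disjunctive}), then Lemma~\ref{Lemma: Dictatorial} immediately yields that dictatorship is the only aggregation rule preserving $P$. As in the proof of Theorem~\ref{Theorem: Acceptability of An Assumption}, I would run the argument uniformly for the four semantics, and, following the remark after Lemma~\ref{Lemma: Dictatorial}, I would only ever use attack rules $\contrary{\alpha}\leftarrow\beta$ as the witnesses $R_1,R_2,R_3$, so that closedness never varies between configurations. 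Note that set-stable is deliberately excluded here, since it is already settled by Theorem~\ref{Theorem: set-stable}.

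A simplification I would exploit throughout is that whenever a configuration admits a \emph{unique} complete extension $\Delta$, the preferred, complete, well-founded and ideal extensions all coincide with $\Delta$: the unique complete extension is also the unique preferred one (every preferred extension is complete), it equals the intersection defining the well-founded extension, and, being admissible, it is the ideal extension. Hence I would design the gadgets so that in every non-excluded configuration the aggregated framework has a single complete extension equal to $\Delta$, collapsing the four semantics to one computation, while in the excluded configuration $\Delta$ fails to be even admissible and is therefore none of the four.

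For the disjunctive witness this is already clean on $\mathcal{A}=\{A,B,C,D,E\}$: take $\mathcal{R}=\{\contrary{B}\leftarrow A\}$, $R_1=\{\contrary{A}\leftarrow C\}$, $R_2=\{\contrary{A}\leftarrow D\}$, and $\Delta=\{B,C,D,E\}$ with $E$ isolated. When $\mathcal{S}=\{\}$, $A$ is unattacked, so $B$ is expelled and $\Delta$ is not admissible; when $\mathcal{S}$ is any of $\{R_1\}$, $\{R_2\}$, $\{R_1,R_2\}$, the robustly-in defenders $C$ or $D$ defeat $A$ and $\{B,C,D,E\}$ becomes the unique complete extension. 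Thus $P$ holds exactly when $\mathcal{S}\neq\{\}$, so $P$ is disjunctive.

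For the implicative witness I would adapt the three-cycle-with-external-repair gadget of Theorem~\ref{Theorem: Acceptability of An Assumption}: a base attack together with $R_1,R_2$ closing an odd cycle through a designated member of $\Delta$ (destroying its defensibility, so $\Delta$ is not admissible in the excluded configuration $\{R_1,R_2\}$), with $R_3$ supplying an external attack from a robustly-in assumption that breaks the cycle and restores $\Delta$. I expect this to be the main obstacle, and it is exactly why the bound rises from $|\mathcal{A}|\geq 4$ (acceptability) to $|\mathcal{A}|\geq 5$: for acceptability it sufficed that one assumption stay \emph{in some} extension, so the rest of the extension was free to fluctuate as $\mathcal{S}$ varied, whereas here a single set $\Delta$ must be \emph{the} extension in all seven non-excluded configurations. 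The fifth assumption is needed precisely to pin down the auxiliary assumptions (the cycle's attacker and its defender tend to swap membership as $\mathcal{S}$ changes) so that the complete extension is uniquely $\Delta$ in each good case. The delicate step is thus the case analysis verifying simultaneously that (i) $\Delta$ is the unique complete extension in all seven good configurations, and (ii) the odd cycle arising in $\{R_1,R_2\}$ leaves $\Delta$ non-admissible, hence neither preferred, complete, well-founded nor ideal. Once both witnesses are in place, Lemma~\ref{Lemma: Dictatorial} closes the proof.
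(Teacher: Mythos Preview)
Your strategy---reduce to Lemma~\ref{Lemma: Dictatorial} by exhibiting that $P$ is both implicative and disjunctive, with only attack rules as the witnesses $R_i$---is exactly the paper's. Your disjunctive witness differs from and is simpler than the paper's (the paper uses a 4-cycle on $\{A,B,C,D\}$ together with a support $D\leftarrow E$ in $\mathcal{R}$ and repair attacks $R_1,R_2$ from $E$); since your construction has no support rules the framework is flat and your ``unique complete $\Rightarrow$ all four semantics coincide'' shortcut goes through cleanly.

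The genuine gap is the implicative witness, which you only sketch. For comparison, the paper instantiates it as $\Delta=\{B,D,E\}$, $\mathcal{R}=\{\contrary{C}\leftarrow D,\ \contrary{A}\leftarrow B,\ E\leftarrow D\}$, $R_1=\contrary{B}\leftarrow C$, $R_2=\contrary{D}\leftarrow A$, $R_3=\contrary{A}\leftarrow E$, placing a support rule in $\mathcal{R}$ (permitted, since only the $R_i$ must be attacks). Your insistence on an \emph{odd} cycle is well-placed, however: in the paper's excluded configuration $\mathcal{S}=\{R_1,R_2\}$ the attacks form the \emph{even} cycle $A\to D\to C\to B\to A$, and there $\Delta=\{B,D,E\}$ is in fact still preferred and complete (it is one of two preferred extensions, the other being $\{A,C,E\}$), so the paper's gadget actually witnesses implicativeness only for the well-founded and ideal semantics. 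A construction closing an odd cycle through a member of $\Delta$, with $R_3$ an external attack breaking it, would make $\Delta$ non-admissible in the excluded case and work uniformly for all four semantics---this is precisely what your sketch anticipates, but you still need to write it down concretely and carry out the seven-case verification.
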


\begin{proof}
	Let $P$ be that an extension $\Delta\subseteq \mathcal{A}$ is preferred, complete, well-founded, or ideal. We need to prove that for $|\mathcal{A}| \geq 5$, $P$ is implicative and disjunctive. 
	Then by Lemma \ref{Lemma: Dictatorial}, the theorem holds.
	The proof has the same structure for all four semantics. It uses
	a generic $\mathcal{A} = \{A,B,C,D,E,\ldots\}$ with at least five assumptions.
    
    To show that $P$ is implicative, let $\Delta = \{B,D,E\}$. Let $\mathcal{R} = \{\contrary{C} \leftarrow D, \quad  \contrary{A} \leftarrow B, \quad E \leftarrow D\}$, $R_1 = \{\contrary{B} \leftarrow C\}$, $R_2 = \{\contrary{D} \leftarrow A\}$, and $R_3 = \{\contrary{A} \leftarrow E\}$ 
	(see the left graph of Figure \ref{fig:Preferred and Complete}). Consider  
	$\mathcal{R}_{agg} = \mathcal{R} \cup \mathcal{S}$ with $\mathcal{S} \subseteq \{R_1, R_2, R_3\}$. If $\mathcal{S} = \{\}, \{R_1\}, \{R_2\}, \{R_3\}, \{R_1, R_3\}, \{R_2, R_3\}$, or $\{R_1, R_2, R_3\}$.  Then $\Delta$ is preferred, complete, well-founded, and ideal. However, if $\mathcal{S} = \{R_1, R_2\}$, 
	$\{A, B, C, D\}$ forms cyclic attacks such that $\Delta$ is not preferred, complete, well-founded, or ideal. Thus,  
	we have identified a set of rules $\mathcal{R}$ and three rules $R_1, R_2, R_3$ such that $P$ holds in $\big \langle \mathcal{L}, \mathcal{R} \cup \mathcal{S}, \mathcal{A}, \contraries \big \rangle$ iff $\mathcal{S} \neq \{R_1, R_2\}$. Hence, $P$ is implicative.
    
    To show that $P$ is disjunctive, let $\Delta = \{B,D,E\}$. 
	Let $\mathcal{R} = \{\contrary{C} \leftarrow D, \quad \contrary{B} \leftarrow C, \quad \contrary{A} \leftarrow B, \quad \contrary{D} \leftarrow A, \quad D \leftarrow E\}$, $R_1 = \{\contrary{C} \leftarrow E\}$, and $R_2 = \{\contrary{A} \leftarrow E\}$ (see the right graph of Figure \ref{fig:Preferred and Complete}). Consider 
	$\mathcal{R}_{agg} = \mathcal{R} \cup \mathcal{S}$ with $\mathcal{S} \subseteq \{R_1, R_2\}$. If $\mathcal{S} = \{R_1\}, \{R_2\},$ or $\{R_1, R_2\}$, then $\Delta$ is preferred, complete, well-founded, and ideal. However, if $\mathcal{S} = \{\}$, 
	$\{A, B, C, D\}$ forms cyclic attacks such that $\Delta$ is not preferred, complete, well-founded, or ideal. Therefore, 
	we have identified
	a set of rules $\mathcal{R}$ and two rules $R_1, R_2$ such that $P$ holds in $\big \langle \mathcal{L}, \mathcal{R} \cup \mathcal{S}, \mathcal{A}, \contraries \big \rangle$ iff $\mathcal{S} \neq \{\}$. Thus, $P$ is disjunctive.
\end{proof}

\begin{figure}
    \centerline{\includegraphics[width = 0.37\textwidth]{./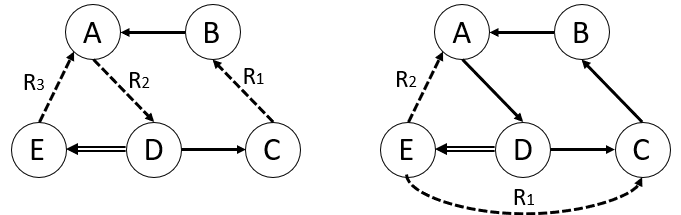}}
	\caption{Preferred, Complete, Well-founded, and Ideal extensions: Implicative case (Left) and Disjunctive case (Right) (for the proof of Theorem~\ref{Theorem: Preferred and Complete}). Here, we use BAFs as a graphical representation of Bipolar ABA frameworks (assuming the deductive interpretation of support). Single-lined \FT{and hyphenated} edges are attacks, double-lined edges are supports. 
	}
    \label{fig:Preferred and Complete}
    \Description{Two graphical representations of Bipolar ABA frameworks to show the preferred, complete, well-founded, and ideal extensions are both implicative and disjunctive.}
\end{figure}

Although preferred and complete semantics may accept multiple extensions, as long as all agents agree on the extensions, then Theorem \ref{Theorem: Preferred and Complete} still holds. The only restriction is the presence of supports in the agents' frameworks. If all agents agree on the supports, i.e., the supports are included in $\mathcal{R}_i$ for all $i \in N$, then 
support does not affect the preservation. Otherwise, some agents will have different sets of closed assumptions from the other agents. This may lead into different 
preferred, complete, well-founded, and ideal extensions.

The corner cases show an impossibility result for $|\mathcal{A}| = 3$ and $|\mathcal{A}| = 4$. Thus, preserving whole extensions is more difficult than preserving the acceptability of single assumptions as in Section \ref{Section: The acceptability of an assumption}.

\begin{theorem}{\label{Theorem: Preferred and Complete equal 4}}
    For $|\mathcal{A}| = 3$ and $|\mathcal{A}| = 4$, quota and oligarchic rules do not preserve preferred, complete, well-founded, and ideal semantics.
\end{theorem}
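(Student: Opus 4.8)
The plan is to establish this non-preservation result by exhibiting explicit counterexamples, in the same spirit as the negative directions of Theorems~\ref{Theorem: Admissibility} and~\ref{Theorem: set-stable}, rather than through the implicative/disjunctive machinery of Lemma~\ref{Lemma: Dictatorial}. The witnesses used in Theorem~\ref{Theorem: Preferred and Complete} require five assumptions, so they are unavailable here and must be replaced by ad hoc profiles over three and four assumptions. Concretely, for each of $|\mathcal{A}|=4$ and $|\mathcal{A}|=3$ I would build a profile of agent frameworks together with a set $\Delta\subseteq\mathcal{A}$ that is simultaneously preferred, complete, well-founded, and ideal in every agent's framework, while the aggregated framework $\mathcal{F}$ fails at least one of these four conditions for $\Delta$.

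For the quota rules I would split on the value of $q$. For nomination ($q=1$) the aggregate takes the union of the agents' rules, so I would choose profiles in which each agent individually admits $\Delta$ as the relevant (unique) extension, but the union introduces an extra attack---typically completing an odd attack cycle among three or four assumptions---that leaves some member of $\Delta$ undefended or renders $\Delta$ non-conflict-free, destroying all four semantics at once. For majority and larger quotas up to unanimity ($q=n$) the aggregate instead discards rules supported by too few agents, so I would use profiles in which the rule defending a member of $\Delta$ against an external attacker is present in fewer than $q$ agents and is therefore dropped from $\mathcal{R}_{agg}$, again leaving $\Delta$ attacked and undefended; the same profile can be tuned so that the well-founded and ideal extensions of $\mathcal{F}$ differ from $\Delta$. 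The three-assumption cyclic profile already used for nomination in Theorem~\ref{Theorem: Acceptability of an Assumption equal 3} can be recycled and augmented to cover these cases.

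For the oligarchic rules I would exploit that, for $|N_v|\ge 2$, the aggregate $\mathcal{R}_{agg}=\bigcap_{j\in N_v}\mathcal{R}_j$ behaves like unanimity over the vetoers: placing a defending rule in only one vetoer's framework forces it out of the aggregate and strips $\Delta$ of a needed defence, so $\Delta$ stops being complete (hence preferred, well-founded, and ideal) in $\mathcal{F}$. The sole degenerate case is dictatorship ($|N_v|=1$), for which $\mathcal{R}_{agg}=\mathcal{R}_d$ makes $\mathcal{F}$ coincide with the dictator's framework and thus trivially preserves every property; this is precisely why the practical reading of the result is an impossibility, namely that preserving whole extensions at $|\mathcal{A}|=3,4$ is out of reach for every \emph{fair} rule, matching the dictatorship characterisation of Theorem~\ref{Theorem: Preferred and Complete} for larger $\mathcal{A}$.

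The main obstacle is the \emph{simultaneity} requirement: each agent framework must make one and the same $\Delta$ qualify under all four semantics at once. Preferredness forces $\subseteq$-maximality, completeness forces the exact defence-fixpoint condition, while well-foundedness and the ideal condition are uniqueness-flavoured (intersection of all complete extensions, and containment in every preferred extension), which in practice compels each agent's framework to have $\Delta$ as its unique complete extension. Engineering this with only three or four assumptions---while keeping enough free rules to later break the property after aggregation---is the delicate part; once a profile is found, checking that the aggregate introduces a cycle or loses a defence, and hence violates at least one of the four conditions, is routine given Theorems~\ref{Theorem: conflict-freeness} and~\ref{Theorem: closure preservation} (which guarantee conflict-freeness and closedness survive, so the failure must come from the defence or maximality clauses).
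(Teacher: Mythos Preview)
Your proposal is correct and follows essentially the same route as the paper: explicit counterexample profiles for $|\mathcal{A}|=3$ and $|\mathcal{A}|=4$ in which a single $\Delta$ is simultaneously preferred, complete, well-founded, and ideal for every agent, while the aggregate fails at least one of these conditions. The paper is slightly more economical in that it uses \emph{one} profile per cardinality (four agents for $|\mathcal{A}|=3$, three agents for $|\mathcal{A}|=4$) and then checks each aggregation rule against that same profile, rather than tailoring separate profiles to different quota values as you sketch; either packaging works. Two small remarks: your passing mention that nomination might ``render $\Delta$ non-conflict-free'' contradicts Theorem~\ref{Theorem: conflict-freeness}, as you yourself note later---the failure under nomination in the paper's $|\mathcal{A}|=3$ example is only for well-founded and ideal, with $\Delta$ remaining preferred and complete. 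And your observation that dictatorship trivially preserves is correct and is a genuine caveat to the theorem's wording that the paper does not make explicit.
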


\begin{theorem}{\label{Theorem: Preferred and Complete less than 2}}
    For $|\mathcal{A}| \leq 2$, every quota and oligarchic rule preserves preferred, complete, well-founded, and ideal semantics.
\end{theorem}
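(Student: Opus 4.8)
The plan is to dispatch the trivial low case and then reduce $|\mathcal{A}|=2$ to a finite, mechanical check, leaning on the already-established preservation of conflict-freeness, closedness (Theorems~\ref{Theorem: conflict-freeness} and~\ref{Theorem: closure preservation}) and admissibility (Theorem~\ref{Theorem: admissibility less than 3 assumptions}). For $|\mathcal{A}|=1$ the statement is essentially vacuous: with a single assumption $\alpha$, rationality forbids the only rule that could ever derive a contrary ($\contrary{\alpha}\leftarrow\alpha$), so no framework has any attack, the unique extension under all four semantics is $\{\alpha\}$ in every agent and in $\mathcal{F}$, and preservation is immediate.

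The crucial simplification for $\mathcal{A}=\{\alpha,\beta\}$ is that a contrary can only be produced by a single rule: $\contrary{\alpha}$ is derivable in a framework iff $\contrary{\alpha}\leftarrow\beta\in\mathcal{R}$, and symmetrically for $\contrary{\beta}$ (the only alternative head $\contrary{\alpha}\leftarrow\alpha$ is excluded by rationality). Hence ``$\alpha$ is attackable'' is equivalent to the presence of one designated rule, a property that aggregates transparently: by the definition of quota and oligarchic rules such a rule lies in $\mathcal{R}_{agg}$ whenever every agent has it (so even unanimity keeps it) and lies in $\mathcal{R}_{agg}$ only if some agent has it. Supports ($\alpha\leftarrow\beta$, $\beta\leftarrow\alpha$) never create attacks; they only enlarge closures, which is already controlled by Theorem~\ref{Theorem: closure preservation}. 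I will also use rationality in the form that no ordered pair of assumptions carries both an attack and a support (as holds for frameworks obtained from BAFs); this is genuinely needed, since otherwise two agents playing $\{\contrary{\beta}\leftarrow\alpha,\ \beta\leftarrow\alpha\}$ and its mirror image both have $\emptyset$ as their unique preferred extension, while unanimity yields an attack-free $\mathcal{F}$ whose preferred extension is $\mathcal{A}$.

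With this in hand I classify an agent's framework by its two directed edges, each being \emph{none}, \emph{attack} or \emph{support}, giving nine configurations, and tabulate for each the complete, preferred, well-founded and ideal extensions (the well-founded and ideal extensions coincide in each of these configurations). For every candidate agreed extension $\Delta\in\{\emptyset,\{\alpha\},\{\beta\},\mathcal{A}\}$ and every semantics I read off which configurations make $\Delta$ have the property, and observe that they all contain exactly the attack rules needed to certify $\Delta$ (e.g. $\{\alpha\}$ is preferred only in configurations where $\alpha$ attacks $\beta$). Those rules are therefore unanimous, hence survive into $\mathcal{R}_{agg}$, and one checks that the resulting configuration of $\mathcal{F}$ again makes $\Delta$ have the property; since admissibility, closedness and conflict-freeness are already preserved, only these maximality, fixpoint and intersection side-conditions remain to be verified.

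I expect two points to carry the real work. First, well-founded and ideal are defined through \emph{all} complete, resp. preferred, extensions of $\mathcal{F}$, so transporting a single extension is not enough: I must confirm that the aggregated configuration introduces no spurious extension that would shift the intersection, which is exactly what the full tabulation certifies. Second, under low quotas (notably nomination) $\mathcal{R}_{agg}$ can contain an attack and a support on the \emph{same} pair even when no single agent does, producing a handful of configurations outside the nine; these extra frameworks (e.g. $\{\contrary{\beta}\leftarrow\alpha,\ \contrary{\alpha}\leftarrow\beta,\ \alpha\leftarrow\beta\}$) must be analysed by hand, and one verifies that the agreed $\Delta$ remains preferred and complete there. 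Pinning down the empty-extension case via the rationality assumption, together with these off-table configurations, are the only genuinely delicate steps; everything else is bookkeeping over the nine base configurations.
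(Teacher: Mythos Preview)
Your proposal is correct but takes a considerably heavier route than the paper. The paper's proof of the $|\mathcal{A}|=2$ case is three lines: if $\Delta=\{\alpha\}$ is preferred (or complete, well-founded, ideal) for every agent, then each agent must contain a deduction $\{\alpha\}\vdash^{R_i}\contrary{\beta}$; since this forces the single rule $\contrary{\beta}\leftarrow\alpha$ into every $\mathcal{R}_i$, it survives any quota or oligarchic rule, and $\Delta$ retains the property in $\mathcal{F}$. The cases $\Delta=\{\beta\}$ and $\Delta=\{\alpha,\beta\}$ are declared analogous. Your full nine-configuration tabulation plus the off-table analysis recovers the same conclusion, but the engine is exactly the one the paper isolates: the attack that certifies maximality (or the fixpoint, or the intersection) is present unanimously and therefore in the aggregate. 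So your case split is sound but largely redundant once you have Theorems~\ref{Theorem: conflict-freeness}, \ref{Theorem: closure preservation} and~\ref{Theorem: admissibility less than 3 assumptions}.

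Where your proposal genuinely goes beyond the paper is the $\Delta=\emptyset$ case. The paper's proof simply does not list $\emptyset$ among ``the other cases,'' and your counterexample---two agents with $\{\contrary{\beta}\leftarrow\alpha,\ \beta\leftarrow\alpha\}$ and its mirror image, each having $\emptyset$ as the unique preferred (and ideal) extension, while unanimity yields $\mathcal{R}_{agg}=\emptyset$ whose sole preferred extension is $\{\alpha,\beta\}$---is valid under the rationality assumptions stated explicitly in the paper (only self-attacks $\contrary{\alpha}\leftarrow\alpha$ are excluded). You patch this by adding the further rationality constraint that no ordered pair carries both an attack and a support; that is a legitimate fix, but it is an assumption you are importing, not one the paper makes explicit. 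In short: the paper's argument is the cleaner one for the non-empty cases, and your extra care is doing real work only at $\Delta=\emptyset$, where it exposes a gap the paper leaves open.
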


\subsection{Non-emptiness of Well-founded Extensions}

The well-founded extension is guaranteed to exist in a Bipolar ABA framework. However, to make sure that the well-founded extension is not empty, then the framework must have at least one unattacked assumption. This way, the unattacked assumptions are included in all complete extensions, and the intersection always has the unattacked assumptions in it. The preservation of non-emptiness of the well-founded extension 
guarantees 
the existence of unattacked assumption with a concept called \textit{k-exclusivity} \cite{Ulle2017}.

\begin{definition}{(k-exclusivity).}
    Let $P$ be a property of Bipolar ABA framework. $P$ is \textit{k-exclusive} if there exist rules $\mathcal{S} = \{R_1, \ldots, R_k\}$ such that if $\mathcal{R} \supseteq \mathcal{S}$ then $P$ does not hold, but if $\mathcal{R} \subset \mathcal{S}$ then $P$ holds.
\end{definition}

Thus, to preserve $P$, the rules $\mathcal{S}$ cannot be adopted together, but only a subset of them. It leads us to the lemma for the preservation.

\begin{lemma}
    Let $P$ be a \textit{k-exclusive} property of Bipolar ABA framework. For $k \geq N$, where $N$ is the number of agents, $P$ is preserved if at least one of the $N$ agents has veto power.
\end{lemma}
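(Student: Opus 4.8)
The plan is to exploit the defining witness of $k$-exclusivity together with the fact that a veto agent's omissions propagate to the aggregate. Fix the set $\mathcal{S}=\{R_1,\dots,R_k\}$ of rules witnessing that $P$ is $k$-exclusive; note that $\mathcal{S}$ is determined by the common $\mathcal{L}$, $\mathcal{A}$ and $\contraries$, hence is shared by every agent's framework and by $\mathcal{F}$. Suppose $P$ holds in $\big\langle \mathcal{L},\mathcal{R}_i,\mathcal{A},\contraries\big\rangle$ for every $i\in N$. Since $\mathcal{R}\supseteq\mathcal{S}$ forces $\neg P$, the contrapositive gives $\mathcal{R}_i\not\supseteq\mathcal{S}$ for each $i$, i.e.\ every agent omits at least one rule of $\mathcal{S}$.

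First I would read ``at least one agent has veto power'' as the oligarchic rule with $N_v\neq\emptyset$, so that $\mathcal{R}_{agg}=\bigcap_{l\in N_v}\mathcal{R}_l\subseteq\mathcal{R}_j$ for any fixed veto agent $j\in N_v$. Because $P$ holds for agent $j$, there is some $R_{l}\in\mathcal{S}$ with $R_{l}\notin\mathcal{R}_j$, and then $R_{l}\notin\mathcal{R}_{agg}$ since a subset cannot contain a rule its superset lacks. Hence $\mathcal{S}\not\subseteq\mathcal{R}_{agg}$, and the ``subset'' clause of $k$-exclusivity (read as: $P$ holds whenever not all of $\mathcal{S}$ is present) delivers $P$ in $\mathcal{F}$. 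This is exactly preservation in the sense of Definition~\ref{Definition: Preservation}, and it is the same move used for unanimity in Lemma~\ref{Lemma: Oligarchy}: a conjunctively-blocking rule cannot resurrect a rule that some decisive agent rejects.

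The role of $k\geq n$ is to pin down the regime in which this sufficient condition is also the interesting one. When $k\geq n$ there are at least as many rules in $\mathcal{S}$ as agents, so without a veto agent the agents can collectively cover all of $\mathcal{S}$ --- each agent blocking a single distinct rule while supplying the rest --- so that a union-style rule such as nomination places every $R_l$ in $\mathcal{R}_{agg}$ and destroys $P$. This shows veto power is not merely sufficient but genuinely needed here, and it motivates the hypothesis. I would present this covering construction only as a remark, since the lemma itself asserts sufficiency.

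The step I expect to be delicate is reconciling the literal statement of $k$-exclusivity with what the proof needs. As written, the definition guarantees $P$ only for $\mathcal{R}\subset\mathcal{S}$ (a proper subset of the witness), whereas $\mathcal{R}_{agg}$ will in general contain many rules outside $\mathcal{S}$ and merely satisfy $\mathcal{S}\not\subseteq\mathcal{R}_{agg}$. The honest fix is to read $k$-exclusivity as asserting that the truth of $P$ depends only on $\mathcal{R}\cap\mathcal{S}$ and holds precisely when $\mathcal{R}\cap\mathcal{S}\subsetneq\mathcal{S}$; under this reading the argument above is complete. I would make this interpretation explicit at the outset, so that both the ``$P$ holds for all agents $\Rightarrow$ each omits a rule of $\mathcal{S}$'' step and the final ``$\mathcal{S}\not\subseteq\mathcal{R}_{agg}\Rightarrow P$'' step are licensed by the same clause.
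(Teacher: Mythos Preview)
Your argument correctly proves the sufficiency direction as literally stated (``if some agent has veto power then $P$ is preserved''), and you are right to flag that the final step needs $k$-exclusivity to be read as depending only on $\mathcal{R}\cap\mathcal{S}$ rather than on the literal clause $\mathcal{R}\subset\mathcal{S}$. However, the paper's own proof goes in the \emph{opposite} direction: it argues that any aggregation rule preserving $P$ must grant some agent veto power. That necessity direction is what the downstream applications (Theorem~\ref{Theorem: The non-emptiness of well-founded} and Corollary~\ref{Corollary: Acyclicity}, both phrased ``at least one agent \emph{must} have veto power'') actually use, and it is precisely the covering construction you relegate to a side remark. The paper's argument is, in essence, your remark: with $k\geq n$ one distributes the witness rules so that each agent omits one (hence each $\mathcal{R}_i\subsetneq\mathcal{S}$ and $P$ holds for every agent by the literal subset clause), while a rule without a veto agent lets the aggregate absorb all of $\mathcal{S}$, killing $P$. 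The hypothesis $k\geq n$ is essential here---one needs at least as many witness rules as agents to hand out distinct omissions---whereas your sufficiency argument does not use it at all, which is itself a signal that the intended content is the other implication.

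In short, you and the paper establish complementary halves. Your direction is cleaner but relies on the reinterpreted $k$-exclusivity; the paper's direction works under the literal definition (the constructed $\mathcal{R}_i$ really are proper subsets of $\mathcal{S}$) and is the one that carries the weight downstream. If revising, promote your covering construction to the main proof and keep your present argument as the easy converse.
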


\begin{proof}
    It needs to be showed that if an aggregation rule preserves $P$, then it has to give at least one agent with veto powers. Notice that if all agents accept a rule $r$, then it must be accepted in the aggregated rules, i.e., $r \in \mathcal{R}_{agg}$ iff $r \in \mathcal{R}_i$ for all $i \in N$.
    
    For some agents $i \in N$ to have veto powers means that $\mathcal{R}_{agg} = (\bigcap \mathcal{R}_i)$. In other words, some agents have veto power, if the intersection of the agents' rules in $\bigcap \mathcal{R}_i$ are all accepted in $\mathcal{R}_{agg}$. Then, take any rule $r \in \mathcal{R}_{agg}$; as $r$ is accepted in the aggregated framework, then all agents with veto powers must accept $r$ as well such that the intersection of the set of rules $\bigcap \mathcal{R}_i$ is not empty.
    
	Thus, the next step is to show that if an aggregation rule preserves $P$, then the intersection of $k$ set of rules must be non-empty, i.e., $\mathcal{R}_1 \cap \ldots \cap \mathcal{R}_k \neq \{\}$. To prove by contradiction, assume there exist a profile of set of rules $\{\mathcal{R}_1 \cup \ldots \cup \mathcal{R}_k\} \subseteq \mathcal{R}_{agg}$ such that $\mathcal{R}_1 \cap \ldots \cap \mathcal{R}_k = \{\}$. Then, it means that for every $j \in \{1, \ldots, k\}$, exactly (the agent with rule set) $\mathcal{R}_j$ accepts a rule $r_j$. As no rule exist in all $\mathcal{R}_i$ for $i \in N$, no agents accept all $k$ rules. However, as each of the $k$ rules is accepted by an agent and $\{\mathcal{R}_1 \cup \ldots \cup \mathcal{R}_k\} \subseteq \mathcal{R}_{agg}$, they are all accepted in the aggregated framework, i.e., $\{r_1, \ldots, r_k\} \subseteq \mathcal{R}_{agg}$, such that $P$ does not hold due to it being an \textit{k-exclusive} property. This contradicts the initial assumption that the aggregation rule preserves $P$.
    
    Therefore, as it can be showed that the intersection of the agents' rules is not empty, then some agents must have veto powers.
\end{proof}

\begin{theorem}{\label{Theorem: The non-emptiness of well-founded}}
    For $|\mathcal{A}| \geq N$, at least one agent must have veto power to preserve the non-emptiness of the well-founded extension.
\end{theorem}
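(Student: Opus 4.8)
The plan is to reduce the statement to the preceding Lemma by showing that $P$, the property ``the well-founded extension is non-empty'', is $k$-exclusive for $k=N$ (so that the hypothesis $k \geq N$ is met). Since the Lemma (and its proof) establish that preserving a $k$-exclusive property with $k \geq N$ forces at least one agent to have veto power, exhibiting a family of $N$ rules $\mathcal{S}=\{R_1,\ldots,R_N\}$ witnessing $N$-exclusivity suffices.

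First I would record the semantic characterisation I intend to use: in a framework with no support rules (so that every subset of $\mathcal{A}$ is closed), the well-founded extension is non-empty iff some assumption is unattacked. The ``if'' direction holds because an unattacked $\alpha$ is defended by every $\Delta$ vacuously, hence lies in every complete extension and so in their intersection. For the converse I would argue the contrapositive: if every assumption is attacked then $\emptyset$ is itself complete. Indeed $\emptyset$ is trivially closed and conflict-free; it is admissible since nothing can attack $\emptyset$; and it satisfies the completeness fixpoint because $\emptyset$ derives no contrary (Bipolar ABA rules have non-empty bodies), so $\emptyset$ defends exactly the unattacked assumptions, of which there are none. Thus the intersection of all complete extensions is empty.

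Next I would build the witness for $N$-exclusivity using $|\mathcal{A}| \geq N$. Pick distinct $\alpha_1,\ldots,\alpha_N \in \mathcal{A}$ and set $R_i = (\contrary{\alpha_i} \leftarrow \alpha_{i+1})$ with indices read cyclically ($\alpha_{N+1}=\alpha_1$), so that $\mathcal{S}=\{R_1,\ldots,R_N\}$ forms an $N$-cycle of attacks. To neutralise any remaining assumptions when $|\mathcal{A}|>N$, I add a fixed block of attack rules $\contrary{\gamma} \leftarrow \alpha_1$ for every $\gamma \in \mathcal{A} \setminus \{\alpha_1,\ldots,\alpha_N\}$, shared by all agents and hence present in $\mathcal{R}_{agg}$. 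With all of $\mathcal{S}$ present, each $\alpha_i$ is attacked by $\alpha_{i+1}$ and every other assumption is attacked by $\alpha_1$, so by the characterisation above the well-founded extension is empty and $P$ fails. If instead some $R_j$ is missing, then $\alpha_j$ — whose only attacker was $\alpha_{j+1}$ via $R_j$ — becomes unattacked, so the well-founded extension is non-empty and $P$ holds. This is precisely $N$-exclusivity, and invoking the Lemma with $k=N \geq N$ closes the argument.

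The main obstacle I anticipate is the treatment of the ``extra'' assumptions when $|\mathcal{A}|>N$: without the shared block that keeps them attacked, they would remain unattacked in every case and keep the well-founded extension non-empty, destroying exclusivity. The delicate point is to argue that this auxiliary block can be taken common to all agents, and therefore is in $\mathcal{R}_{agg}$, so that it never interferes with whether exactly the $N$ rules of $\mathcal{S}$ are jointly present. A secondary care point is the completeness check for $\emptyset$, which rests squarely on the Bipolar ABA restriction that rule bodies are non-empty, guaranteeing that $\emptyset$ derives no contrary and hence attacks nothing.
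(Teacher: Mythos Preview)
Your reduction to the Lemma via $k$-exclusivity is the right strategy, and your semantic characterisation of when the well-founded extension is non-empty is more carefully argued than the paper's. However, your claimed witness for $N$-exclusivity does not actually satisfy the definition when $|\mathcal{A}|>N$. The definition requires that $P$ fail for \emph{every} $\mathcal{R}\supseteq\mathcal{S}$; in particular for $\mathcal{R}=\mathcal{S}$ itself. But with your $\mathcal{S}$ consisting only of the $N$ cycle rules, taking $\mathcal{R}=\mathcal{S}$ leaves every $\gamma\in\mathcal{A}\setminus\{\alpha_1,\ldots,\alpha_N\}$ unattacked, so the well-founded extension is non-empty and $P$ holds. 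Your auxiliary block cannot repair this: $k$-exclusivity is a condition on arbitrary frameworks $\langle\mathcal{L},\mathcal{R},\mathcal{A},\contraries\rangle$, not on profiles, and there is no mechanism in the definition for fixing background rules that lie outside $\mathcal{S}$. You are effectively smuggling a profile-level argument into a framework-level definition, which is exactly the ``delicate point'' you flag but do not resolve.

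The paper sidesteps the issue by taking $k=|\mathcal{A}|$ rather than $k=N$ and cycling through \emph{all} assumptions, so that $\mathcal{R}=\mathcal{S}$ already leaves nothing unattacked; since $|\mathcal{A}|\geq N$, the hypothesis $k\geq N$ of the Lemma is still met. Your construction admits the same fix: either enlarge the cycle to cover all of $\mathcal{A}$, or absorb the auxiliary block into $\mathcal{S}$ (which also gives $|\mathcal{S}|=|\mathcal{A}|$ and makes each removed rule leave exactly one assumption unattacked). Either way the right $k$ is $|\mathcal{A}|$, not $N$.
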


\begin{proof}
    Let a Bipolar ABA framework property $P$ be the non-emptiness of
    the well-founded extension. We need to show that $P$
    is \textit{k-exclusive}. Let $k = |\mathcal{A}|$ and
	$\{A_1, \ldots, A_k\} \subseteq \mathcal{A}$. Assume that $\mathcal{S}$ consists of all rules $\contrary{A_{i+1}} \leftarrow A_i$ for $i < |\mathcal{A}|$ as well as $\contrary{A_1} \leftarrow A_k$, illustrated in Figure \ref{fig:Non-emptiness of well-founded extension}. This $\mathcal{S}$  fits the definition of \textit{k-exclusive}. Indeed, if $\mathcal{S} \subseteq \mathcal{R}$, then in the case of $\mathcal{S} = \mathcal{R}$, the well-founded extension is empty due to the cyclic attacks. However, if only a subset of it is adopted\FT{,} $\mathcal{R} \subset \mathcal{S}$, the well-founded extension is not empty as at least one assumption is not attacked. Thus, $P$ is preserved when at least one agent has veto power to prevent cyclic attacks.
\end{proof}

\begin{figure}
    \centering
    \includegraphics[width = 0.17\textwidth]{./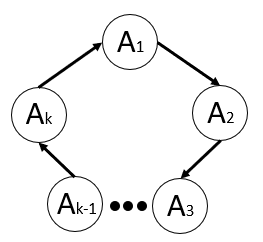}
    \caption{Graphical illustration of the \textit{k-exclusivity} property (for the proof of Theorem~\ref{Theorem: The non-emptiness of well-founded})}
    \label{fig:Non-emptiness of well-founded extension}
    \Description{A graphical representation of Bipolar ABA frameworks to show the k-exclusivity property for the non-emptiness of well-founded extensions}
\end{figure}

\FT{S}upports in Bipolar ABA framework do not affect the preservation of 
non-emptiness of \FT{the} well-founded extension because supports between assumptions do not 
\FT{affect} the unattacked assumption\FT{:} if there 
\FT{is} a rule $\contrary{\alpha} \leftarrow \beta$ for $\alpha, \beta \in \mathcal{A}$ and $\beta$ is unattacked, then supports from and into $\beta$ do not change the fact that $\beta$ is unattacked; and supports from and into $\alpha$ also leave $\beta$ unattacked.

\subsection{Acyclicity}

It is clear that \textit{k-exclusivity} deals with cyclic attacks. A Bipolar ABA framework is acyclic if there does not exist any cyclic attacks among the assumptions. Corollary \ref{Corollary: Acyclicity} is extended from Theorem 8 in \cite{Ulle2017} in the way that supports are also considered.

\begin{definition}(Cyclic \FT{Attacks})
    The rule set $\mathcal{R}$ in
    $\big \langle \mathcal{L}, \mathcal{R}, \mathcal{A}, \contraries \big \rangle$
	\emph{contain cyclic attacks} if there exist a chained connection between
	some of the assumptions in $\mathcal{A}$, such that $\mathcal{R}
    \supseteq \{\contrary{\alpha_1} \leftarrow \alpha_2 , \contrary{\alpha_2} \leftarrow \alpha_3, \ldots, \contrary{\alpha_k} \leftarrow \alpha_1\}$ for
    $\alpha_i \in \mathcal{A}$ and $k \geq 2$.
\end{definition}

The preservation result for acyclicity has a similar proof structure as the preservation of the non-emptiness of the well-founded extension in Theorem \ref{Theorem: The non-emptiness of well-founded}. Thus, it is presented as a corollary.

\begin{corollary}{\label{Corollary: Acyclicity}}
    For $|\mathcal{A}| \geq N$
	, at least one agent must have veto power to preserve acyclicity.
\end{corollary}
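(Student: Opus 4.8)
The plan is to reduce the statement to the $k$-exclusivity machinery already developed, mirroring the proof of Theorem~\ref{Theorem: The non-emptiness of well-founded} almost verbatim. Let $P$ denote acyclicity, i.e.\ the property that $\mathcal{R}$ contains no cyclic attacks in the sense of the preceding definition. It suffices to exhibit that $P$ is $k$-exclusive with $k=|\mathcal{A}|$: once this is done, the hypothesis $|\mathcal{A}|\geq N$ gives $k=|\mathcal{A}|\geq N$, so the Lemma on $k$-exclusive properties applies directly and forces at least one agent to hold veto power whenever an aggregation rule preserves $P$.

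To establish $k$-exclusivity, I would fix an enumeration $\{A_1,\ldots,A_k\}$ of $\mathcal{A}$ (so $k=|\mathcal{A}|$) and take as witness the Hamiltonian attack-cycle through all assumptions, $\mathcal{S}=\{\contrary{A_2}\leftarrow A_1,\; \contrary{A_3}\leftarrow A_2,\; \ldots,\; \contrary{A_k}\leftarrow A_{k-1},\; \contrary{A_1}\leftarrow A_k\}$, which is exactly the set depicted in Figure~\ref{fig:Non-emptiness of well-founded extension}. If $\mathcal{R}\supseteq\mathcal{S}$, then $\mathcal{R}$ contains the whole chain closing back on $A_1$, so by the definition of cyclic attacks there is a cycle of length $k\geq 2$ and $P$ fails. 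Conversely, if $\mathcal{R}\subset\mathcal{S}$, then at least one link $\contrary{A_{i+1}}\leftarrow A_i$ (indices taken cyclically) is missing; since every remaining rule lies on that single cycle, deleting one link leaves an acyclic chain and no cyclic attack can be formed, so $P$ holds. This is precisely the $k$-exclusivity condition.

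With $k$-exclusivity verified for $k=|\mathcal{A}|\geq N$, I would invoke the Lemma on $k$-exclusive properties. Its proof establishes the necessity direction: any aggregation rule preserving such a property must make the collective rule set equal to an intersection over a nonempty set of agents whose rules are all accepted, which is exactly the statement that at least one agent holds veto power. Instantiating this with $P=$ acyclicity yields the corollary. Because the argument is structurally identical to that of Theorem~\ref{Theorem: The non-emptiness of well-founded}, differing only in the property being shown $k$-exclusive, presenting it as a corollary is justified.

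The main point to check carefully is the ``proper subset'' half of $k$-exclusivity, namely that removing any single rule from $\mathcal{S}$ genuinely restores acyclicity rather than merely breaking this one cycle. This is immediate because $\mathcal{S}$ is constructed to contain no attack chain other than the one Hamiltonian cycle, so no alternative cycle survives. I would also add the remark (parallel to the one following Theorem~\ref{Theorem: The non-emptiness of well-founded}) that support rules are irrelevant here: cyclic attacks are defined purely through rules with contrary heads, so introducing support rules $\alpha\leftarrow\beta$ into $\mathcal{R}$ or $\mathcal{S}$ neither creates nor destroys cyclic attacks and leaves the whole argument unchanged.
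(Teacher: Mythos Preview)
Your proposal is correct and follows essentially the same approach as the paper: you set $P$ to be acyclicity, exhibit the same Hamiltonian attack cycle $\mathcal{S}=\{\contrary{A_{i+1}}\leftarrow A_i\}\cup\{\contrary{A_1}\leftarrow A_k\}$ to witness $k$-exclusivity with $k=|\mathcal{A}|$, and then invoke the Lemma on $k$-exclusive properties, exactly as the paper does. Your treatment is in fact slightly more careful than the paper's in justifying the proper-subset direction (explaining why no alternative cycle survives), but the argument is structurally identical.
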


\begin{proof}
    Let 
	$P$ be acyclicity. We need to show that $P$ is \textit{k-exclusive}. To get a cycle, a minimum number of two assumptions are required. Thus, let $k = |\mathcal{A}|\geq 2$ and $\{A_1, \ldots, A_k\} \subseteq \mathcal{A}$. Assume that the rule set  $\mathcal{S}$ consists of $ \contrary{A_{i+1}} \leftarrow A_i$ for $i < |\mathcal{A}|$ as well as $\contrary{A_1} \leftarrow A_k$, illustrated in Figure \ref{fig:Non-emptiness of well-founded extension}. This $\mathcal{S}$ fits the definition of \textit{k-exclusivity}. Indeed, if $\mathcal{S} \subseteq \mathcal{R}$, then in the case of $\mathcal{S} = \mathcal{R}$, the cyclic attacks remain in the framework. However, if only a subset of $\mathcal{S}$ is adopted ($\mathcal{R} \subset \mathcal{S}$), then the cyclic attacks are broken because at least one rule that connects the cycle disappears. Therefore, $P$ is preserved when at least one agent has veto power.
\end{proof}

The presence of supports does not make an acyclic framework to be\FT{come} cyclic, but instead may break any existing cycle. Let $k = |\mathcal{A}|$ with $|\mathcal{A}| \geq 2$, $\{A_1, \ldots, A_k\} \subseteq \mathcal{A}$, and $\mathcal{S} = \{\contrary{A_{i+1}} \leftarrow A_i$: $i < |\mathcal{A}|\}$. The rules in $\mathcal{S}$ are acyclic and if a support $A_1 \leftarrow A_k$ or $A_k \leftarrow A_1$ is added, for example, then they will remain acyclic. On the contrary, if there exist cyclic attacks, then 
support may break the cycle due to 
closedness
.

\subsection{Coherence}

Coherence amounts to two or more semantics coinciding (in other words, given a Bipolar ABA framework, two or more semantics give identical extensions thereof). For example, if a set of assumptions is set-stable, then it is preferred as well. Our next preservation result extends Theorem~9 in \cite{Ulle2017} and shows that, in order to preserve coherence, the aggregation rule must be dictatorial. The proof for the result uses the concept of implicativeness and disjunctiveness introduced in Section \ref{Section: The acceptability of an assumption}.

\begin{theorem}{\label{Theorem: Coherence}}
    For $|\mathcal{A}| \geq 4$, the only aggregation rule preserving coherence is dictatorship.
\end{theorem}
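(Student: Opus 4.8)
The plan is to reduce Theorem~\ref{Theorem: Coherence} to Lemma~\ref{Lemma: Dictatorial} exactly as was done for Theorems~\ref{Theorem: Acceptability of An Assumption} and~\ref{Theorem: Preferred and Complete}. That is, I let $P$ be the property ``coherence holds,'' meaning that two designated semantics (say set-stable and preferred) coincide on the framework, and I aim to show that for $|\mathcal{A}| \geq 4$ this $P$ is both \emph{implicative} and \emph{disjunctive} in each agent's framework. Once both are established, Lemma~\ref{Lemma: Dictatorial} immediately yields that dictatorship is the only Bipolar ABA aggregation rule preserving $P$, which is the claim.

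The core work is therefore to exhibit, over a generic assumption set $\mathcal{A} = \{A,B,C,D,\ldots\}$ with at least four elements, two gadget constructions. First I would fix a base rule set $\mathcal{R}$ together with three candidate rules $R_1,R_2,R_3 \notin \mathcal{R}$ (each of the attack form $\contrary{\alpha} \leftarrow \beta$, as the remark following Lemma~\ref{Lemma: Dictatorial} insists) so that coherence holds in $\langle \mathcal{L}, \mathcal{R}\cup\mathcal{S}, \mathcal{A}, \contraries\rangle$ for every $\mathcal{S} \subseteq \{R_1,R_2,R_3\}$ \emph{except} $\mathcal{S} = \{R_1,R_2\}$; in that single excluded case the addition of $R_1$ and $R_2$ should create a configuration (typically a cyclic attack on $\{A,B,C\}$, as in the proof of Theorem~\ref{Theorem: Acceptability of An Assumption}) that makes, e.g., a set-stable extension fail to be preferred, or makes the set-stable extension disappear while a preferred one persists, so that the two semantics come apart. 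This establishes implicativeness. Second, I would exhibit a base $\mathcal{R}'$ and two rules $R_1',R_2' \notin \mathcal{R}'$ such that coherence holds for all $\mathcal{S} \subseteq \{R_1',R_2'\}$ \emph{except} $\mathcal{S} = \{\}$: here the missing attack in the empty case is what allows some extension accepted under the weaker semantics to escape the stronger one, breaking coincidence, while adding either $R_1'$ or $R_2'$ restores it. This establishes disjunctiveness.

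I expect the main obstacle to be the design of these two gadgets so that the coincidence/non-coincidence of the two chosen semantics toggles exactly at the prescribed subset $\mathcal{S}$, and only there, across \emph{all} the relevant semantic pairs one might mean by ``coherence.'' Because coherence is a relation between two semantics rather than a property of a single extension, verifying $P$ in each of the $2^3 = 8$ (resp. $2^2 = 4$) subset cases requires checking that the two semantics yield the \emph{same} family of extensions, not merely that one fixed $\Delta$ survives; this is more bookkeeping than in Theorem~\ref{Theorem: Preferred and Complete}. The natural strategy is to reuse the cyclic-attack mechanism already validated in the earlier figures: a cycle through $\{A,B,C\}$ (or $\{A,B,C,D\}$) destroys the stable-type extension while leaving an admissible/preferred one intact, which is precisely how the two semantics are forced to diverge. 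The spare assumptions beyond the first four serve only as fixed, unattacked padding and do not interfere, which is why the bound $|\mathcal{A}| \geq 4$ suffices. I would also need a one-line check that, as stressed in the remark after Lemma~\ref{Lemma: Dictatorial}, using only attack-form rules keeps closures identical across agents, so that $P$ genuinely holds or fails uniformly and the reduction to Lemma~\ref{Lemma: Dictatorial} is sound.
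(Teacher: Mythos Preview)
Your proposal is correct and follows essentially the same route as the paper: reduce to Lemma~\ref{Lemma: Dictatorial} by exhibiting, over $\mathcal{A}=\{A,B,C,D,\ldots\}$, an implicative gadget and a disjunctive gadget in which a cyclic attack among three assumptions makes the (unique) preferred extension fail to be set-stable exactly at the forbidden subset $\mathcal{S}$. The paper's concrete gadgets actually place a support rule in the \emph{base} $\mathcal{R}$ (e.g., $D\leftarrow A$) while keeping $R_1,R_2,R_3$ attack-form, which is fully consistent with your reading of the remark after Lemma~\ref{Lemma: Dictatorial}; everything else, including your caution that one must verify coincidence of the two semantics in every subset case rather than track a single $\Delta$, matches the paper's argument.
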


\begin{proof}
    Let 
	$P$ be coherence. We need
    to prove that, for $|\mathcal{A}| \geq 4$, $P$ is implicative and
    disjunctive. Take a Bipolar ABA framework with at least four
    assumptions $\mathcal{A} = \{A,B,C,D,\ldots\}$.
    
    To show that $P$ is implicative, let $\mathcal{R}
    = \{\contrary{C} \leftarrow A, \quad D \leftarrow A\}$, $R_1
    = \{\contrary{B} \leftarrow C\}$, $R_2 = \{\contrary{A} \leftarrow
    B\}$, and $R_3 = \{\contrary{C} \leftarrow D\}$, as illustrated in
    the left graph of Figure \ref{fig:Coherence}. Consider an
    aggregated framework with
    $\mathcal{S} \subseteq \{R_1, R_2, R_3\}$. If $\mathcal{S}
    = \{\}$, $\{R_1\}$, $\{R_3\}$, or $\{R_1, R_3\}$, the only
    preferred extension is $\{A,B,D\}$, which is set-stable as well. If
    $\mathcal{S} = \{R_2\}$, the set of assumptions $\{B,C,D\}$ is both
    preferred and set-stable. If $\mathcal{S} = \{R_2, R_3\}$ or $\{R_1,
    R_2, R_3\}$; then the set of assumptions $\{B,D\}$ is both
    preferred and set-stable as well. However, if $\mathcal{S} = \{R_1,
    R_2\}$, the only preferred extension is $\{D\}$ and it is not
    set-stable as the other assumptions are not attacked. Thus, there
    exists a set of rules $\mathcal{R}$ and three rules $R_1, R_2,
    R_3$ such that $P$ holds in
    $\big \langle \mathcal{L}, \mathcal{R} \cup \mathcal{S}, \mathcal{A}, \contraries \big \rangle$
    iff $\mathcal{S} \neq \{R_1, R_2\}$. Accordingly, $P$
    is an implicative property.
    
    To show that $P$ is disjunctive, let $\mathcal{R} = \{\contrary{A} \leftarrow D, \quad \contrary{B} \leftarrow A, \quad \contrary{D} \leftarrow B, \quad C \leftarrow A\}$, $R_1 = \{\contrary{D} \leftarrow C\}$, and $R_2 = \{\contrary{B} \leftarrow C\}$, as illustrated in the right graph of Figure \ref{fig:Coherence}. Consider an aggregated framework $\big \langle \mathcal{L}, \mathcal{R}_{agg}, \mathcal{A}, \contraries \big \rangle$, where $\mathcal{R}_{agg} = \mathcal{R} \cup \mathcal{S}$ with $\mathcal{S} \subseteq \{R_1, R_2\}$. If $\mathcal{S} = \{R_1\}$ or $\{R_1, R_2\}$, the set of assumptions $\{A,C\}$ is both preferred and set-stable. If $\mathcal{S} = \{R_2\}$, the set of assumptions $\{C,D\}$ is also preferred and set-stable. However, if $\mathcal{S} = \{\}$, the preferred extension is $\{C\}$ and it is not set-stable because the other assumptions are not attacked. Therefore, there exists a set of rules $\mathcal{R}$ and two rules $R_1, R_2$ such that $P$ holds in $\big \langle \mathcal{L}, \mathcal{R} \cup \mathcal{S}, \mathcal{A}, \contraries \big \rangle$ iff $\mathcal{S} \neq \{\}$. Hence, $P$ is a disjunctive property.
    
    As $P$ is proven to be both implicative and disjunctive, then by Lemma \ref{Lemma: Dictatorial}, for $P$ to be preserved, the aggregation rule must be dictatorial.
\end{proof}

\begin{figure}
    \centering
    \includegraphics[width = 0.27\textwidth]{./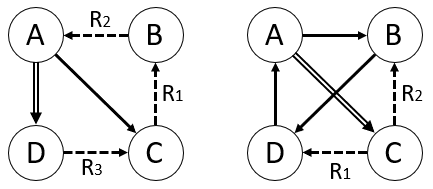}
	\caption{Coherence: Implicative case (Left) and Disjunctive case (Right) (for the proof of Theorem~\ref{Theorem: Coherence}). Here, we use BAFs as a graphical representation of Bipolar ABA frameworks (assuming the deductive interpretation of support). Single-lined \FT{and hyphenated} edges are attacks, double-lined edges are supports. 
	}
    \label{fig:Coherence}
    \Description{Two graphical representations of Bipolar ABA frameworks to show that the coherence property is both implicative and disjunctive.}
\end{figure}

Note that Theorem \ref{Theorem: Coherence} also works for other semantics (indeed, in the proof, the accepted sets of assumptions may be complete and well-founded, rather than just preferred and set-stable).

The presence of supports is acceptable in the preservation of coherence only if the supports are adopted by each agent, such that all agents have the same closure of assumptions. If supports join the additional rules in $\mathcal{S}$ as either $R_1, R_2,$ or $R_3$, then coherence is not preserved in the aggregated framework as some agents have different set of closures than the other
agents.

For corner cases, it is easier to preserve coherence\FT{, as indeed} 
unanimity 
preserves it for $|\mathcal{A}| \leq 3$. Moreover, admittedly less interestingly, both quota and oligarchic rules preserve coherence when there is one assumption.

\begin{theorem}{\label{Theorem: Coherence 2 and 3}}
    For $|\mathcal{A}| = 2$ or $|\mathcal{A}| = 3$, unanimity rule is the only quota rule that preserves coherence.
\end{theorem}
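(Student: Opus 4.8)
The plan is to prove the two halves separately: first that unanimity preserves coherence when $|\mathcal{A}|\in\{2,3\}$, and then that every non-unanimous quota ($1\le q\le n-1$) fails. Throughout I read coherence as the coincidence of the preferred and set-stable extensions, exactly as in the proof of Theorem~\ref{Theorem: Coherence}, and I exploit the characterisation of unanimity as $\mathcal{R}_{agg}=\bigcap_{i\in N}\mathcal{R}_i$, so that every attack and every closure computed in $\mathcal{F}$ is a substructure of the corresponding one in each agent's framework (fewer rules give fewer deductions of contraries and smaller closures).

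For the positive half I would show that the intersection of coherent frameworks on at most three assumptions is again coherent. Since $|\mathcal{A}|\le 3$ there are only finitely many rule sets, so I can first list the incoherent ones: for two assumptions these are exactly the five sets built from a support-plus-attack pair such as $\{\contrary{\beta}\leftarrow\alpha,\ \beta\leftarrow\alpha\}$ together with their supersets; for three assumptions one further adds the odd $3$-cycle of attacks and its support-contaminated variants. The lemma to establish is that each incoherent $I$ satisfies one of: (i) $I$ has no coherent superset at all, or (ii) there is a fixed rule $r_I\notin I$ belonging to \emph{every} coherent superset of $I$. Granting this, suppose for contradiction $\mathcal{R}_{agg}=\bigcap_i\mathcal{R}_i=I$ with every $\mathcal{R}_i$ coherent: in case (i) no $\mathcal{R}_i\supseteq I$ can be coherent, while in case (ii) every $\mathcal{R}_i$ contains $r_I$, whence $r_I\in\bigcap_i\mathcal{R}_i=I$, contradicting $r_I\notin I$. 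Either way the intersection is never incoherent, so unanimity preserves coherence.

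For the negative half I would exhibit, for each $q\in\{1,\dots,n-1\}$, a profile of coherent agents whose $F_q$-aggregate is incoherent. For $|\mathcal{A}|=3$ the target is the attack $3$-cycle $\{\contrary{\beta}\leftarrow\alpha,\ \contrary{\gamma}\leftarrow\beta,\ \contrary{\alpha}\leftarrow\gamma\}$, which is incoherent because its only preferred extension is $\emptyset$, and $\emptyset$ is not set-stable. Two complementary constructions cover all $q$. For small $q$, give each agent just a two-attack path of the cycle (each such agent is coherent, since a two-attack path has an unattacked source whose preferred extension is also set-stable), arranging each cycle edge to reach exactly $q$ agents so that $F_q$ reassembles the full cycle. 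For large $q$, give every agent the whole cycle plus a private repairing chord (the $3$-cycle becomes coherent once any reverse edge such as $\contrary{\gamma}\leftarrow\alpha$ is added), arranging each chord to stay below the quota so that $F_q$ strips the chords and leaves the bare, incoherent cycle. The first works for $q\le 2n/3$ and the second for $q\ge n/3+1$, so together they cover every $q\in\{1,\dots,n-1\}$ once $n\ge3$. For $|\mathcal{A}|=2$ the same scheme targets the incoherent $I_1=\{\contrary{\beta}\leftarrow\alpha,\ \beta\leftarrow\alpha\}$, using the coherent singletons $\{\contrary{\beta}\leftarrow\alpha\}$ and $\{\beta\leftarrow\alpha\}$ (padded with $\emptyset$) for $q\le n/2$, and for $q\ge n/2$ mixing these singletons with copies of the unique coherent superset $\{\contrary{\alpha}\leftarrow\beta,\ \contrary{\beta}\leftarrow\alpha,\ \beta\leftarrow\alpha\}$ so that the repair $\contrary{\alpha}\leftarrow\beta$ lies in exactly $2q-n<q$ agents and is discarded, leaving $\mathcal{R}_{agg}=I_1$.

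The main obstacle is the positive half, specifically verifying the key lemma for $|\mathcal{A}|=3$: I must rule out the existence of an incoherent three-assumption framework admitting two coherent supersets whose additional rules are disjoint, since such an $I$ could be realised as an intersection of coherent agents and would defeat unanimity-preservation. Supports are the delicate point, because adding a support rule changes $Cl$ and can shift a framework in or out of coherence non-monotonically; the verification must therefore handle the support-induced incoherent configurations (not merely the attack $3$-cycle), confirming in each case that either every coherent superset shares a common extra rule (case (ii)) or none exists (case (i)).
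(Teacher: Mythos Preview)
Your key lemma for the positive half is false, and the counterexample is the attack $3$-cycle you already use for the negative half. Let $I=\{\contrary{\beta}\leftarrow\alpha,\ \contrary{\gamma}\leftarrow\beta,\ \contrary{\alpha}\leftarrow\gamma\}$ on $\mathcal{A}=\{\alpha,\beta,\gamma\}$. Adding any single reverse chord $\contrary{\alpha}\leftarrow\beta$, $\contrary{\beta}\leftarrow\gamma$, or $\contrary{\gamma}\leftarrow\alpha$ yields a coherent framework: the chord's source becomes the unique preferred extension and the unique set-stable extension (e.g.\ with $\contrary{\alpha}\leftarrow\beta$ added, $\{\beta\}$ is preferred and set-stable). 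These three coherent supersets of $I$ share no common added rule, so neither your case~(i) nor case~(ii) applies. Worse, your own large-$q$ construction already supplies the refuting profile: with three agents, each holding $I$ plus a distinct chord, all three frameworks are coherent, yet the unanimity aggregate $\bigcap_i\mathcal{R}_i=I$ has $\emptyset$ as its only preferred extension and no set-stable extension at all. Under the framework-level reading of coherence you adopt, the positive claim ``unanimity preserves coherence for $|\mathcal{A}|=3$'' is therefore simply not true; the obstacle you flagged in your last paragraph is fatal rather than merely tedious.

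The paper's argument does not attempt to show that intersections of coherent frameworks are coherent. It reads preservation through Definition~\ref{Definition: Preservation}, i.e.\ parametrised by a fixed $\Delta$: one assumes a single $\Delta$ is simultaneously preferred and set-stable in \emph{every} agent's framework and then argues directly from $\mathcal{R}_{agg}\subseteq\mathcal{R}_i$. That reading sidesteps the profile above, because the three chord-agents have three \emph{different} preferred$=$set-stable extensions ($\{\alpha\}$, $\{\beta\}$, $\{\gamma\}$), so no common $\Delta$ is available and the hypothesis of Definition~\ref{Definition: Preservation} is never triggered. Your proposal is thus aiming at a strictly stronger---and, as shown, false---statement than what the paper's proof addresses. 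The paper's proof also says nothing about the ``only'' direction, so your negative-half constructions, while reasonable, have no counterpart there to compare against.
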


\begin{theorem}{\label{Theorem: Coherence 1}}
    For $|\mathcal{A}| = 1$, every quota rule and oligarchic rule preserves coherence.
\end{theorem}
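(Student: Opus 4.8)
The plan is to exploit the extreme scarcity of admissible rules when $|\mathcal{A}| = 1$. Write $\mathcal{A} = \{\alpha\}$. Since every rule $\phi \leftarrow \beta$ must have its body $\beta$ in $\mathcal{A}$ and its head $\phi$ either in $\mathcal{A}$ or of the form $\contrary{\gamma}$ for some $\gamma \in \mathcal{A}$, the only two syntactically possible rules are $\alpha \leftarrow \alpha$ and $\contrary{\alpha} \leftarrow \alpha$. The rationality assumption adopted in Section~\ref{Section: Social Choice Theory} excludes the self-attacking rule $\contrary{\alpha} \leftarrow \alpha$, so the only rule any agent may hold is the trivial support $\alpha \leftarrow \alpha$. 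First I would record this, together with the fact that any quota or oligarchic rule returns $\mathcal{R}_{agg} \subseteq \bigcup_{i \in N} \mathcal{R}_i$, so the aggregated framework also satisfies $\mathcal{R}_{agg} \subseteq \{\alpha \leftarrow \alpha\}$.

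Next I would observe that in any framework over $\mathcal{A} = \{\alpha\}$ whose rules are contained in $\{\alpha \leftarrow \alpha\}$ there are no attacks at all, since no contrary $\contrary{\gamma}$ is derivable by any deduction. Consequently $Cl(\{\}) = \{\}$ and $Cl(\{\alpha\}) = \{\alpha\}$, and both $\{\}$ and $\{\alpha\}$ are closed and conflict-free. I would then compute the extensions directly: $\{\alpha\}$ is admissible and $\subseteq$-maximal, hence the unique preferred extension; it is the unique complete extension (the empty set fails, since $\alpha$ is defended and so must belong to any complete extension); it is the unique set-stable extension (the empty set fails to attack $Cl(\{\alpha\}) = \{\alpha\}$, whereas $\{\alpha\}$ is vacuously set-stable); it is the well-founded extension (the intersection of the single complete extension); and it is the ideal extension. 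Thus every such framework is coherent, with all the semantics of Table~\ref{Table:Semantics} coinciding on $\{\alpha\}$.

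Finally I would close the argument: since coherence holds in every framework over a single assumption satisfying the rationality assumption, it holds in particular in each agent's framework and in the aggregated framework $\mathcal{F}$, whatever quota or oligarchic rule is used. Hence coherence is (trivially) preserved, as required. I do not expect a genuine obstacle here; the only point demanding care is to verify that the presence or absence of the support $\alpha \leftarrow \alpha$ changes neither the (empty) attack relation nor the closures, so that the computation of extensions is identical in all cases and the semantics coincide non-vacuously on $\{\alpha\}$ rather than collapsing.
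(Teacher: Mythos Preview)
Your proposal is correct and follows essentially the same approach as the paper: show that with a single assumption all the semantics coincide on $\{\alpha\}$, so coherence holds in every such framework and a fortiori in the aggregated one. You are in fact more careful than the paper, which simply asserts $\mathcal{R}_i = \{\}$ for all $i$; your handling of the possible trivial support $\alpha \leftarrow \alpha$ is a nice touch that does not change the conclusion.
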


\section{Conclusion}{\label{Section: Conclusion}}
We have considered Bipolar ABA Frameworks \cite{Cyras0T17} to account for both attack and support relationships between arguments in Bipolar Argumentation, as it allows to capture uniformly different 
\FT{interpretations} of support. 
The aggregation of Bipolar ABA Frameworks combines the rules of all agents into a collective set of rules. We made use of the aggregation rules from judgement aggregation \cite{Christian2013, Grossi2014}, specifically, quota and oligarchic rules, to combine these agents' rules 
and extended results (on Abstract Argumentation) from \cite{Ulle2017}. Generally, the preservation results show that most properties can be preserved, but with significant restrictions sometimes. The results  
assume agreement among the agents 
on language, assumptions, contraries, and assume that agents accept the same properties (Definition\FT{s} \ref{Definition: Bipolar ABA aggregation} and \ref{Definition: Preservation}).
We observe that, when the notion of agreement comes into play, 
the presence of supports 
does not greatly affect the performance of the aggregation rules towards preservation.

As regards positive results, conflict-freeness and closedness are preserved by any quota and oligarchic rule (Theorem\FT{s} \ref{Theorem: conflict-freeness} and \ref{Theorem: closure preservation}). We proved positive results for admissibility and set-stability as well, albeit with some restrictions, such as limiting the number of assumptions or the choice of aggregation rules. Admissibility is preserved by nomination for at least four assumptions, else it is preserved by every quota and oligarchic rule; while the set-stable semantics is preserved by nomination (Theorem\FT{s} \ref{Theorem: Admissibility}, \ref{Theorem: admissibility less than 3 assumptions}, and \ref{Theorem: set-stable}).

We show that some properties can only be preserved by oligarchic rules or dictatorship. These particular aggregation rules are actually not ideal, as they ignore most opinions. However, we still deem this better than not being able to preserve the properties at all. For the properties of acceptability of an assumption,
and coherence
when the number of assumptions is at least four, dictatorship is the only preserving rule (Theorem\FT{s} \ref{Theorem: Acceptability of An Assumption}
 and \ref{Theorem: Coherence}). 
The same holds for preferred, complete, well-founded, and ideal semantics, but by assuming at least five assumptions (Theorem \ref{Theorem: Preferred and Complete}). Unsurprisingly, in the corner cases, these properties can be preserved with other quota rules (Theorem\FT{s} \ref{Theorem: Acceptability of an Assumption equal 3}, \ref{Theorem: Acceptability of an Assumption less than 2}, \ref{Theorem: Preferred and Complete equal 4}, \ref{Theorem: Preferred and Complete less than 2}
, \ref{Theorem: Coherence 2 and 3}, and \ref{Theorem: Coherence 1}).

Preservation results also involve the non-emptiness of the well-founded extension and acyclicity. We proved that both properties are preserved when at least one agent has veto power and the number of assumptions is greater or equal than the number of agents (Theorem \ref{Theorem: The non-emptiness of well-founded} and Corollary \ref{Corollary: Acyclicity}). This unique constraint is meant to avoid cyclic relationships. 

To conclude, our preservation study produces stronger results to fill the gaps in \cite{Ulle2017} since we consider more properties, some relevant to Bipolar Argumentation only (closedness) others also relevant to Abstract Argumentation  (ideal semantics); we also provide preservation results for corner cases. 

There are several possible directions for future work. First of all, here the preservation of properties relies on the agreement of all agents. However, in real applications it is likely that some agents have different opinions, i.e., some of them might disagree on the properties. Thus, it would be interesting to study preservation when a number of agents disagree. Another path to work on in the future is to have agents with different knowledge about the environment, meaning that they might have different languages, assumptions, or contraries.
A further promising direction for future work is to expand the choice of aggregation rules with a more complex formalisation. 
Finally, it would be worth to generalise this study for the more general ABA Frameworks of \cite{ABA97,CyrasFST17}\FT{, as well as for other forms of structured argumentation,  such as ASPIC~\cite{Modgil:Prakken:2014}, DeLP~\cite{Garcia:Simari:2014} or logic-based argumentation~\cite{Besnard:Hunter:2014}. In particular, the} 
possibility of having \FT{rules with} empty body 
might need specific attention 
\FT{when it comes to aggregation}.



\bibliographystyle{ACM-Reference-Format} 
\bibliography{bibliography}


\section*{Appendix}

\begin{proof}[Proof of Theorem \ref{Theorem: Acceptability of an Assumption less than 2}]
    If $|\mathcal{A}| = 1$, the result holds vacuously as $\alpha \in \mathcal{A}$ must be accepted under all five semantics.
	If $|\mathcal{A}| = 2$ and $\alpha \in \mathcal{A}$ is accepted under those semantics, then $\alpha$ must be accepted in the aggregated framework as there cannot exist a deduction 
	$\{\beta\} \vdash^R \contrary{\alpha}$ for $\beta \in \mathcal{A} \setminus \{\alpha\}$ with $R \subseteq \mathcal{R}_{agg}$. By 
	definition of quota rules and oligarchic rules, then 
	no such $R$ can exist in the agents' 
	rule sets either. Thus, $\alpha$ is accepted 
	by all agents.
\end{proof}


\begin{proof}[Proof of Theorem \ref{Theorem: Preferred and Complete equal 4}]
    Let $P$ be  
	any property amongst being a preferred, complete, well-founded, or ideal extension. 
	To show that 
	$P$ is 
	not preserved by quota rules, we give counter examples. For $|\mathcal{A}| = 3$, assume four Bipolar ABA frameworks with two frameworks having rules $\mathcal{R}_{1,2} = \{\contrary{B} \leftarrow A, \quad \contrary{C} \leftarrow B\}$ and the other two frameworks having rules $\mathcal{R}_{3,4} = \{\contrary{A} \leftarrow B, \quad \contrary{B} \leftarrow C\}$. The set of assumptions $\Delta = \{A,C\}$ is preferred, complete, well-founded, and ideal in each framework. However, using the unanimity rule ($q = 4$) or oligarchic rule with veto powers given to all frameworks, 
	$\{A,B,C\}$ is  well-founded and ideal, as well as the only extension to be preferred, complete, in the aggregated framework, given that $\mathcal{R}_{agg} = \{\}$. With majority 
	or nomination rule, the extensions $\{A,C\}$ and $\{B\}$ are preferred, $\{A,C\}$, $\{B\}$, and $\{\}$ are complete, and 
	$\{\}$ is well-founded and ideal in the aggregated framework,  with 
	$\mathcal{R}_{agg} = \{\contrary{B} \leftarrow A, \quad \contrary{C} \leftarrow B, \quad \contrary{A} \leftarrow B, \quad \contrary{B} \leftarrow C\}$. Thus, $\Delta$ (and so $P$) is not preserved 
	using quota rules and oligarchic rules
	. 
    
	For $|\mathcal{A}| = 4$, assume three Bipolar ABA frameworks with rules $\mathcal{R}_1 = \{\contrary{A} \leftarrow D, \quad \contrary{D} \leftarrow B, \quad \contrary{C} \leftarrow D\}$, $\mathcal{R}_2 = \{\contrary{A} \leftarrow D, \quad \contrary{B} \leftarrow D, \quad \contrary{D} \leftarrow C\}$, and $\mathcal{R}_3 = \{D \leftarrow A\}$. 
	$\Delta = \{A,B,C\}$ is preferred, complete, well-founded, and ideal in each framework. However, using unanimity 
	($q = 3$) or oligarchic rules with veto powers given to all 
	agents, the  well-founded and ideal extension, as well as the only preferred, complete extension, is $\{A,B,C,D\}$ in the aggregated framework, as 
	$\mathcal{R}_{agg} = \{\}$. With majority 
	($q = 2$), 
	$\{B,C,D\}$ is preferred, complete, well-founded, and ideal in the aggregated framework, with 
	$\mathcal{R}_{agg} = \{\contrary{D} \leftarrow A\}$. Lastly, 
	nomination 
	gives $\{A,B,C\}$ and $\{D\}$ as the preferred extensions, $\{A,B,C\}$, $\{D\}$, and $\{\}$ as the complete extensions, and 
	$\{\}$ as the well-founded and ideal extension. Thus, $\Delta$ (and so $P$) is not preserved using quota 
	or
	oligarchic rules
	.
\end{proof}


\begin{proof}[Proof of Theorem \ref{Theorem: Preferred and Complete less than 2}]
    If $|\mathcal{A}| = 1$, the result holds vacuously as the only extension is $\Delta = \{\alpha\}$ for $\alpha \in \mathcal{A}$. 
    If $|\mathcal{A}| = 2$, if $\Delta = \{\alpha\}$ is preferred, complete, well-founded, or ideal in all the agents' frameworks, then, for each $i \in N$, there must be 
	a deduction 
	$\{\alpha\} \vdash^{R_i} \contrary{\beta}$
	. Thus, any quota 
	and oligarchic rules preserve $P$. The other cases ($\Delta = \{\beta\}$ and $\Delta = \{\alpha, \beta\}$) can be proven similarly.
\end{proof}


\begin{proof}[Proof of Theorem \ref{Theorem: Coherence 2 and 3}]
	Assume that coherence holds for each agent, for an extension $\Delta$. By contradiction, assume that $\Delta$ is not coherent in the aggregated $\mathcal{F}$. By unanimity
	, 
	$\mathcal{R}_i \supseteq \mathcal{R}_{agg}$ for all $i \in N$. Thus, $\Delta$ is not coherent in the agents' frameworks, 
	leading to a contradiction.
\end{proof}


\begin{proof}[Proof of Theorem \ref{Theorem: Coherence 1}]
	If $|\mathcal{A}|=1$ then $\mathcal{R}_i=\{\}$ (for all $i \in N$) and  
	$\{\alpha\}$ is coherent as it is closed, conflict-free, admissible, preferred, complete, set-stable, well-founded, and ideal. Therefore, using any quota or oligarchic rule, coherence also holds for the aggregated framework
	. 
\end{proof}

\end{document}